\documentclass[letterpaper]{article} 
\usepackage[]{aaai24}  
\usepackage{times}  
\usepackage{helvet}  
\usepackage{courier}  
\usepackage[hyphens]{url}  
\usepackage{graphicx} 
\urlstyle{rm} 
\usepackage{natbib}  
\usepackage{caption} 
\frenchspacing  
\setlength{\pdfpagewidth}{8.5in} 
\setlength{\pdfpageheight}{11in} 
%
\usepackage{algorithm}

%
\usepackage{newfloat}
\usepackage{listings}
\DeclareCaptionStyle{ruled}{labelfont=normalfont,labelsep=colon,strut=off} 
\lstset{%
	basicstyle={\footnotesize\ttfamily},
	numbers=left,numberstyle=\footnotesize,xleftmargin=2em,
	aboveskip=0pt,belowskip=0pt,%
	showstringspaces=false,tabsize=2,breaklines=true}
\floatstyle{ruled}
\newfloat{listing}{tb}{lst}{}
\floatname{listing}{Listing}
%
\pdfinfo{
/TemplateVersion (2024.1)
}

\usepackage[switch, modulo]{lineno}


\setcounter{secnumdepth}{1} 

\usepackage{amsthm}
\usepackage{amssymb}
\usepackage{mathtools}
\usepackage[noend]{algpseudocode}
\usepackage{xcolor}

\algnewcommand{\LineComment}[1]{\State \(\#\) #1}
\newcommand{\mytilde}{\raise.17ex\hbox{$\scriptstyle\mathtt{\sim}$}}
\newcommand{\Dp}{D_p}
\newcommand{\De}{D_e}
\newcommand{\Hp}{\H^a_p}
\newcommand{\He}{\H^a_e}


\newtheorem{thm}{Theorem}

\newtheorem{lemma}[thm]{Lemma}
\newtheorem{corollary}[thm]{Corollary}
\newtheorem{definition}{Definition}


\newcommand{\checkit}[1]{}

\newcommand{\tuple}[1]{\langle #1 \rangle}


\newcommand{\A}{\mathcal{A}}

 \renewcommand{\H}{\mathcal{H}}
 
 \renewcommand{\L}{\mathcal{L}}
\newcommand{\M}{\mathcal{M}}

 \newcommand{\T}{\mathcal{T}}
\newcommand{\U}{\mathcal{U}} 
\newcommand{\V}{\mathcal{V}}





\newcommand{\pre}{\mathsf{pre}}
\newcommand{\eff}{\mathsf{eff}}
\newcommand{\Pre}{\mathsf{Pre}}
\newcommand{\Eff}{\mathsf{Eff}}

\newcommand{\prop}[1]{{\cal P}_{A}}


\usepackage{xspace}
\newcommand{\block}[0]{\textsc{Blocks}\xspace}
\newcommand{\mic}[0]{\textsc{Miconic}\xspace}
\newcommand{\driver}[0]{\textsc{Driverlog}\xspace}
\newcommand{\sate}[0]{\textsc{Satellite}\xspace}

%


\title{Action Model Learning with Guarantees}

\title{Action Model Learning with Guarantees}
\author {
    Diego Aineto,
    Enrico Scala
}
\affiliations {
    Department of Information Engineering, University of Brescia, Italy\\
    \{diego.ainetogarcia, enrico.scala\}@unibs.it
}


\begin{document}

\maketitle

\begin{abstract}
This paper studies the problem of action model learning with full observability. 
Following the learning by search paradigm by Mitchell, we develop a theory for action model learning based on version spaces that interprets the task as search for hypothesis that are consistent with the learning examples.
Our theoretical findings are instantiated in an online algorithm that maintains a compact representation of all solutions of the problem. 
Among these range of solutions, we bring attention to actions models approximating the actual transition system from below (sound models) and from above (complete models). We show how to manipulate the output of our learning algorithm to build deterministic and non-deterministic formulations of the sound and complete models and prove that, given enough examples, both formulations converge into the very same true model. Our experiments reveal their usefulness over a range of planning domains.
\end{abstract}

\section{Introduction}



The engineering of action models is complicated and error prone, constituting one of the main bottlenecks in the application of model-based reasoning \cite{kambhampati:2007:model-lite}. Automating this process holds the promise of enabling the AI planning machinery \cite{ghallab:2004:planning} over a provably consistent model of the domain. Action model learning tackles this problem by computing an approximation of a domain's dynamics from demonstrations. 

Most of the research in action model learning of the last two decades has been focused on learning under partial observability, investigating the application of different techniques with the aim of either improving the expressiveness of the learnt models or handling more incomplete and noisy demonstrations. Against this trend, \emph{Safe Action Model (SAM) Learning} \cite{Stern:2017:SAM,Juba:2021:LiftedSAM,Juba:2022:SSAM,Mordoch:2023:NSAM} is a family of algorithms that takes a step back to study the fully observable setting from a theoretical standpoint that puts the emphasis on the properties of the learnt model. In particular, SAM focuses on learning \emph{safe} models, i.e., those with which an agent can safely execute actions all the way to the goal.

This paper deepens this theoretical first principled investigation through the lens of version spaces \cite{mitchell:1982:generalization}.
We focus our attention on classical planning models and develop a framework to learn action preconditions and effects by maintaining a version space of all hypothesis consistent with the demonstrations.
The computed version space provides an efficient representation of all solutions to the action model learning problem. Among these solutions, those at the boundaries have special properties. On one end there is pessimism and, on the other, optimism. The pessimistic form leads to construct \emph{sound} models, i.e., those never allowing the agent to take a wrong step as per the \emph{safe} property studied by SAM.
The optimistic form leads to \emph{complete} models, i.e., those with which an agent can speculate about the existence of a plan.
The sound model generates plans that are guaranteed to work with the true model, but will often discard valid plans. On the other hand, complete models do not in general produce valid plans, but guarantee the existence of a plan if one exists for the true model. Our framework aims at getting the best of both worlds, by showing that, much as sound models lead to deterministic planning formulations, complete models can be captured through the extra expressiveness of non-deterministic ones.



The main contribution of our work is theoretical. Indeed, our investigation precisely establishes rules that heavily exploit the structure of the hypothesis space in order to learn all the solutions of an action model learning problem. Such rules materialize into an online algorithm that outputs a compact representation of the set of solutions. Then, we show how to manipulate this representation to extrapolate sound and complete action model formulations that, given enough demonstrations, converge at the very same true model.
Finally, we conduct an experimental evaluation over a range of domains with the purpose of understanding the usefulness of the proposed framework. Our findings demonstrate that the adoption of a sound and a complete model provides the agent with better reasoning capabilities earlier in the learning process. This is due to the fact that complete models can exploit negative demonstrations, too.

The paper is organized as follows: We start off with background material on action model learning and version spaces. Then we delve into building a precise mapping between these two worlds (Section \ref{sec:mapping}) outlining several theoretical results. Section \ref{sec:sound_complete} shows how to leverage this mapping to build sound and complete action models, which are then practically evaluated in Section \ref{sec:evaluation}. We conclude with related work and discussion (sections \ref{sec:related} and \ref{sec:discussion}).

\section{Preliminaries}\label{sec:preliminaries}

This section presents the basic notions around action model learning and version spaces.

\subsection{Action Model Learning}

An action model is a description of the capabilities of some agent, system or environment. In this work, we focus on learning deterministic action models with conjunctive preconditions \cite{mcdermott:1998:pddl}, as defined below.

\begin{definition}[Action Model]\label{def:action_model}
    An action model is a tuple $M = \tuple{F,A,\pre,\eff}$ where:
    \begin{itemize}
        \item $F = \{f_1, \ldots, f_n\}$ is a finite set of Boolean state variables called \emph{fluents}. A positive (resp. negative) \emph{literal} is $l = f$ (resp. $l = \neg f)$ and its completement is $\Bar{l} = \neg l$. We denote the set of all literals by $L$.
        \item $A$ is a finite set of labels called \emph{actions}.
        \item $\pre : A \rightarrow 2^L$ defines the \emph{precondition} $\pre(a) \subseteq L$ for all $a \in A$.
        \item $\eff : A \rightarrow 2^L$ defines the \emph{effect} $\eff(a) \subseteq L$ for all $a \in A$.
    \end{itemize}
\end{definition}

Action models are often represented in lifted manner, by parameterising actions and fluents over a set of \emph{objects}. We adopt a ground represented for ease of presentation, but all our results are directly applicable to the lifted case. An action model succinctly represents a transition system where a state $s$ is an assignments over $F$, represented by a subset of $L$ without conflicting values. We denote by $S$ the state space induced by $F$.
Formally, an action model $M = \tuple{F,A,\pre,\eff}$ induces the transition system $\T_M = \{\tuple{s,a,s'} \in S \times A \times S \mid \pre(a) \subseteq s \land s' = (s \setminus \overline{\eff(a)}) \cup \eff(a)\}$
where $\overline{\eff(a)} = \{\overline{l} \mid l \in \eff(a)\}$.



Action models are widely used in AI planning to formulate reachability problems over the induced transition system. A \emph{classical planning problem} $P = \tuple{M, s_0, G}$ is defined by combining an action model $M = \tuple{F,A,\pre,\eff}$ with an \emph{initial state} $s_0 \in S$ and \emph{goal condition} $G \subseteq L$. A solution for $P$ is a sequence of actions $\pi = (a_1, \ldots a_n)$ known as \emph{plan} and its \emph{execution} in $s_0$ yields an interleaved sequence $\langle s_0, a_1, s_1, a_2, s_2, \ldots, a_n, s_n \rangle$ that alternates actions and states iteratively reached by applying the action one after the other. A plan is a valid solution if every \emph{transition} $\tuple{s_i, a_{i+1}, s_{i+1}}$ belongs to the transition system $\T_M$ and $G \subseteq s_n$. We denote the set of solution plans for $P$ by $\Pi(P)$.

Action model learning is about computing the action model of an agent from \emph{demonstrations} of its capabilities. Hereinafter, we denote by $\A$ the \emph{true action model} of the agent and assume that it complies with Definition $\ref{def:action_model}$. 
Demonstrations are collected from executions of $\A$, e.g., a plan or random walk, and represented similarly to transitions.

\begin{definition}[Demonstration]
    A demonstration is a triple $d = \tuple{s,a,s'}$ consisting of a pre-state $s \in S$, an action $a \in A$, and a post-state $s' \in S \cup \{\bot\}$.
\end{definition}

In this work, we consider \emph{positive demonstrations}, those transitions of $\T_\A$, and \emph{negative demonstrations}, those representing the failure of executing action $a$ in state $s$ which we indicate by using a $\bot$ post-state.

An action model learning problem takes as input a set of fluents $F$, a set of actions $A$ and a set of demonstrations $D$. The aim of action model learning 
is to find an action model that is \emph{consistent} with all the demonstrations in $D$.
It is worth noting that the space of models that can be synthesised given $F$ and $A$ is finite, i.e, $\M = \{\tuple{F,A,\pre,\eff} \mid \forall a \in A : \pre(a) \in 2^L \land \eff(s) \in 2^L\}$.



\begin{definition}[Action Model Learning Problem]\label{def:aml_problem}
An action model learning problem is a tuple $\Lambda = \tuple{F, A, D}$ where $F$ is a set of fluents, $A$ is a set of actions, and $D$ is a set of demonstrations. A solution for $\Lambda$ is an action model $M = \tuple{F,A,\pre,\eff}$ such that:
\begin{enumerate}
    \item for all positive demonstrations $\tuple{s,a,s'} \in D$, it holds that $\pre(a) \subseteq s$ and $s' = (s \setminus \overline{\eff(a)}) \cup \eff(a)$;
    \item for all negative demonstrations $\tuple{s,a,\bot} \in D$, it holds that $\pre(a) \not\subseteq s$.
\end{enumerate}
We denote by $\M_D$ the subset of the model space $\M$ that satisfies (1) and (2), i.e., the set of solutions of $\Lambda$.
\end{definition}

Hereinafter, we use $\M_D$ and \emph{consistent action models} interchangeably, depending on the nature of the text, to refer to the set of solutions of the action model learning problem.



\subsection{Version Spaces}

We adopt the notation and definitions introduced in later extensions of the version spaces framework \cite{lau:2003:algebra}. 

\begin{definition}[Hypothesis and Hypothesis Spaces]
    A \emph{hypothesis} is a function $h : I \rightarrow O$. A \emph{hypothesis space} $\H$ is a set of functions with the same domain and range.
\end{definition}

\begin{definition}[Learning Example]
    A learning example $\epsilon$ is a pair $(i,o) \in I \times O$. A hypothesis $h$ is consistent with a learning example $\epsilon = (i,o)$ if and only if $h(i) = o$.
\end{definition}

\begin{definition}[Version Space]
    Given a hypothesis space $\H$ and a set of learning examples $E$, the \emph{version space} $\V_{\H,E}$ is the subset of $\H$ that is consistent with all examples in $E$. We will often omit the subscripts if the hypothesis space and learning set are clear from the context.
\end{definition}

Let $\leq$ be a partial order relation between elements in $\H$, a version space $\V$ can be efficiently represented in terms of its least upper bound, called $\U$ boundary, and its greatest lower bound, called $\L$ boundary, relative to $\leq$ \cite{lau:2000:algebra}. The consistent hypotheses are those that belong to the boundaries or lie between them in the partial order. Formally, $\V_{\H,E} = \{ h \in \H \mid \exists (h_\L,h_\U) \in \L_{\H,E} \times \U_{\H,E} : h_\L \leq h \leq h_\U\}$. \emph{Version Space Learning} \cite{mitchell:1982:generalization} is an online algorithm that maintains a version space by updating its boundaries each time a new learning example is observed. The update function must ensure that everything within the new boundaries is consistent and everything outside, inconsistent.

\begin{figure}
    \centering
    \includegraphics[width=0.9\linewidth]{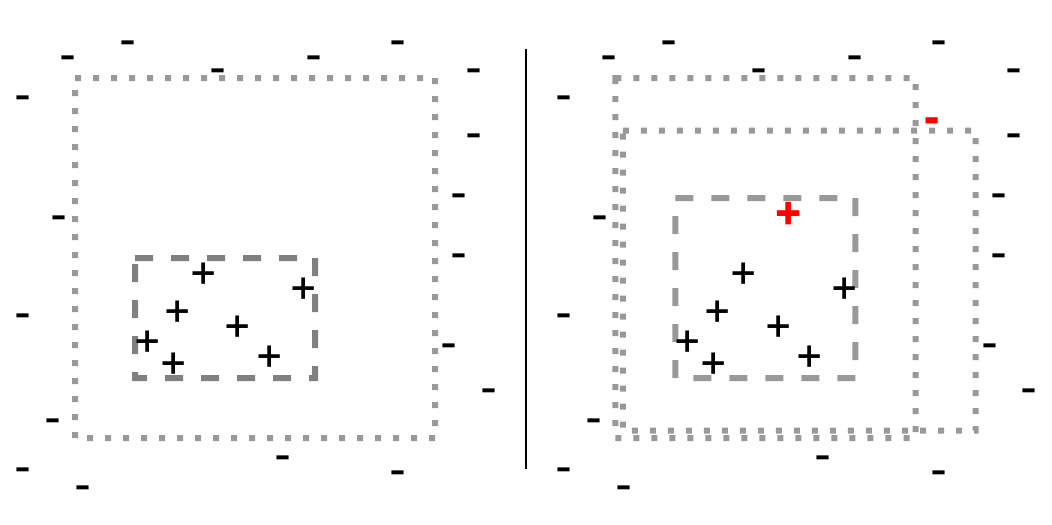}
    \caption{Version spaces and version space learning.}
    \label{fig:version_space_learning}
\end{figure}

Figure \ref{fig:version_space_learning} illustrates the main ideas of version space learning. The left side shows a version space represented by its $\L$ (dashed line) and $\U$ (dotted line) boundaries. $\L$ represents the most pessimistic hypothesis, a minimal frontier that encloses only the observed positive examples ("+" signs), i.e., those of the target class. On the other hand, $\U$ represents the most optimistic hypothesis, a maximal frontier that keeps all negative examples ("-" signs) outside. The $\L$ boundary guarantees that any unseen example within its region will be positive, whilst the $\U$ boundary guarantees that everything outside its frontier will be negative.
The version space learning algorithm updates these boundaries as new learning examples appear. On the right hand of the figure, we show how these boundaries have been updated after two new learning examples (in red) are observed. When the new example is positive, the $\L$ boundary grows to contain it; otherwise, if it is negative, the $\U$ boundary shrinks to reject it. A boundary may consist of one or more hypothesis, e.g., in our figure the $\U$ boundary contains two hypotheses after the update.






\section{Version Space Learning for Action Models}\label{sec:mapping}

This section proposes a novel framework for action model learning based on version spaces. Roughly, our approach learns the precondition and effect of an action by computing a version space of its preconditions and a version space of its effects. We start by defining the hypothesis spaces and the update functions. Then, we present our algorithm.

\subsection{The Hypothesis Space}


For simplicity, with abuse of notation, in our context a hypothesis $h$ is a subset of $L$ that represents a function. When $h$ is a \emph{precondition hypothesis}, $h$ represents the \emph{applicability} function $App_h : S \rightarrow \{0,1\}$ defined as $App_h(s) = h \subseteq s$. On the other hand, if $h$ is an \emph{effect hypothesis}, $h$ represents the \emph{successor} function $Suc_h : S \rightarrow S$ defined as $Suc_h(s) = (s \setminus \overline{h}) \cup h$. 

Let $a \in A$, with $\Hp = 2^L$ we define the hypothesis space of $a$'s preconditions, while with $\He = 2^L$ the hypothesis space of its effects. We order our hypothesis spaces using a set inclusion relation. Specifically, given two precondition hypotheses $h_1$ and $h_2$ in $\Hp$, $h_1 \leq h_2$ iff $h_1 \supseteq h_2$, and given two effect hypotheses $h_1$ and $h_2$ in $\He$, $h_1 \leq h_2$ iff $h_1 \subseteq h_2$. Note the opposite direction of the inclusion relation.


Considering these hypothesis spaces, the learning examples will be pairs $(s,b) \in S \times \{0,1\}$ for the preconditions, and pairs $(s,s') \in S \times S$ for the effects. Learning examples are implicitly given by the demonstrations. A positive demonstration $\tuple{s,a,s'}$ entails the learning example $(s,1)$ for the precondition and the learning example $(s,s')$ for the effect. On the other hand, a negative demonstration $\tuple{s,a,\bot}$ entails only the learning example $(s,0)$ for the precondition. Hereinafter, $\Dp$ and $\De$ denote the learning examples entailed by a set of demonstrations $D$ for $\Hp$ and $\He$, respectively.
A hypothesis $h \in \Hp$ is consistent with a learning example $(s,b)$ iff $App_h(s) = b$. Analogously, a hypothesis $h \in \He$ is consistent with a learning example $(s,s')$ iff $Suc_h(s) = s'$.

The following theorem shows that, when the learning examples come from the same set of demonstrations $D$, any model $M$ built using preconditions and effects from the learnt version spaces is a solution of the action model learning problem $\Lambda = \tuple{F,A,D}$, i.e., $M \in \M_D$.


\begin{thm}\label{thm:decomposition}
    Let $\V_{\Hp,\Dp}$ and $\V_{\He,\De}$ be the version spaces of preconditions and effects of $a \in A$. The action model $M = \tuple{F,A,\pre,\eff}$ belongs to $\M_D$ if and only if $\forall a \in A : \pre(a) \in \V_{\Hp,\Dp} \land \eff(a) \in \V_{\He,\De'}$.
\end{thm}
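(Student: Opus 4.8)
The plan is to prove the biconditional by unfolding the definition of $\M_D$ (Definition~\ref{def:aml_problem}) and matching its two clauses, one by one, against the consistency conditions that define the version spaces $\V_{\Hp,\Dp}$ and $\V_{\He,\De}$. The observation that makes this work is that the requirements on a solution of $\Lambda = \tuple{F,A,D}$ decompose along two orthogonal axes: clause (1) is the conjunction of a constraint mentioning only $\pre(a)$, namely $\pre(a) \subseteq s$, and a constraint mentioning only $\eff(a)$, namely $s' = (s \setminus \overline{\eff(a)}) \cup \eff(a)$; clause (2) mentions only $\pre(a)$; and no clause couples a precondition with an effect or relates the data of two different actions. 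Hence each $\pre(a)$ and each $\eff(a)$ can be constrained, and chosen, in isolation, which is exactly what the right-hand side of the theorem expresses.

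For the forward direction I would fix $a \in A$ and an arbitrary precondition example $(s,b) \in \Dp$. By construction of $\Dp$ this example is entailed either by a positive demonstration $\tuple{s,a,s'} \in D$, in which case $b = 1$, or by a negative demonstration $\tuple{s,a,\bot} \in D$, in which case $b = 0$. In the first case clause (1) of Definition~\ref{def:aml_problem} gives $\pre(a) \subseteq s$, i.e. $App_{\pre(a)}(s) = 1 = b$; in the second case clause (2) gives $\pre(a) \not\subseteq s$, i.e. $App_{\pre(a)}(s) = 0 = b$. As $(s,b)$ was arbitrary, $\pre(a)$ is consistent with every example in $\Dp$, so $\pre(a) \in \V_{\Hp,\Dp}$. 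For effects, an arbitrary $(s,s') \in \De$ is entailed by a positive demonstration $\tuple{s,a,s'} \in D$, and clause (1) yields $s' = (s \setminus \overline{\eff(a)}) \cup \eff(a) = Suc_{\eff(a)}(s)$, so $\eff(a) \in \V_{\He,\De}$. Since $a$ was arbitrary this gives the whole conjunction on the right.

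For the converse I would assume $\pre(a) \in \V_{\Hp,\Dp}$ and $\eff(a) \in \V_{\He,\De}$ for every $a \in A$, take an arbitrary demonstration $d \in D$, and check the applicable clause of Definition~\ref{def:aml_problem}. If $d = \tuple{s,a,s'}$ is positive it entails $(s,1) \in \Dp$ and $(s,s') \in \De$; consistency of $\pre(a)$ with $(s,1)$ gives $\pre(a) \subseteq s$ and consistency of $\eff(a)$ with $(s,s')$ gives $s' = (s \setminus \overline{\eff(a)}) \cup \eff(a)$, which is clause (1). If $d = \tuple{s,a,\bot}$ is negative it entails $(s,0) \in \Dp$, and consistency of $\pre(a)$ with $(s,0)$ gives $\pre(a) \not\subseteq s$, which is clause (2). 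Ranging over all $d \in D$ yields $M \in \M_D$.

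I do not expect a genuine obstacle: the statement is in essence a bookkeeping lemma, and the only thing that needs care is keeping the correspondence between demonstrations and the learning examples they entail exact. Two small points are worth stating explicitly. First, $\M_D$ imposes no hidden joint condition linking $\pre(a)$ and $\eff(a)$: in clause (1) the successor $(s \setminus \overline{\eff(a)}) \cup \eff(a)$ is well defined for any $\eff(a) \subseteq L$ regardless of $\pre(a)$, so the two constraints really are independent. Second, negative demonstrations contribute nothing to $\De$, which matches the fact that Definition~\ref{def:aml_problem} asks nothing of effects on negative demonstrations; this is why the effect side of the equivalence only ever refers to examples coming from positive demonstrations. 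With these remarks in place, the two directions above close the proof.
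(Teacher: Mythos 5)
Your proof is correct and follows essentially the same route as the paper's: both arguments unfold Definition~\ref{def:aml_problem} and match clause (1) against consistency with the entailed examples $(s,1)$ and $(s,s')$, and clause (2) against consistency with $(s,0)$, exploiting that the constraints on $\pre(a)$ and $\eff(a)$ decouple. Your version is somewhat more explicit about checking each direction of the biconditional and about why no hidden coupling exists, but the substance is identical.
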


\begin{proof}
Let $d = \tuple{s,a,s'}$ be a positive demonstration in $D$ that entails the learning examples $(s,1) \in \Dp$ and $(s,s') \in \De$. From the definition of version space, $(\pre(a),\eff(a))$ belongs to $\V_{\Hp,\Dp} \times \V_{\He,\De}$ if and only if $App_{\pre(a)}(s) = 1$ and $Suc_{\eff(a)}(s) = s'$ or, equivalently, iff $\pre(a) \subseteq s$ and $s' = (s \setminus \overline{\eff(a)}) \cup \eff(a)$. Now, let $d = \tuple{s,a,\bot}$ be a negative demonstration in $D$ and $(s,0) \in \Dp$ the entailed learning example. Again, $\pre(a) \in \V_{\Hp,\Dp}$ if and only if $App_{\pre(a)}(s) = 0$, i.e., iff $\pre(a) \not\subseteq s$. Therefore, if $\forall a \in A : \pre(a) \in \V_{\Hp,\Dp} \land \eff(a) \in \V_{\He,\De}$, $M$ satisfies conditions (1) and (2) of Definition \ref{def:aml_problem}, i.e., $M \in \M_D$; otherwise, $M \not\in \M_D$.
\end{proof}

\subsection{Initializing and Updating the Version Space}

The initialization of the version space learning algorithm sets the version space to contain the whole hypothesis space, i.e., $\V_{\Hp,\emptyset} = \Hp$ and $\V_{\He,\emptyset} = \He$. This is done by setting the $\L$ and $\U$ boundaries to contain the minimal and maximal elements of the hypothesis space, respectively. In our problem, for all actions $a \in A$, $\L_{\Hp,\emptyset} = \{L\}$, $\U_{\Hp,\emptyset} = \{\emptyset\}$, $\L_{\He,\emptyset} = \{\emptyset\}$, and $\U_{\He,\emptyset} = \{L\}$. 
Note that, by Theorem \ref{thm:decomposition} and the definition of version spaces, these boundaries allow to compactly represent the full space of action models $\M$.




As illustrated in Figure \ref{fig:version_space_learning}, updating a version space involves extending the $\L$ boundary or shrinking the $\U$ boundary. This is done by modifying the hypothesis that constitute the boundaries or removing them. In our context, where hypotheses are sets, the update consists in finding the smallest superset or the largest subset that is consistent with the new demonstration. Next theorem shows how the boundaries for the version space of $a$'s preconditions $\V_{\Hp}$ are updated. Intuitively, we extend the $\L$ boundary by removing any literal not in pre-state of a positive demonstration, whilst we shrink the $\U$ boundary by adding some literal not in the pre-state of a negative demonstration. 






\begin{thm}[Update rules for $\V_{\Hp}$]\label{thm:rules_precondition}
Let $\L_{\Hp,\Dp}$ and $\U_{\Hp,\Dp}$ be the boundaries of a version space $\V_{\Hp,\Dp}$ and 
$d$ a demonstration. The updated version space $\V_{\Hp,\Dp'}$, with $D' = D \cup \{d\}$, is given by the following rules.

\noindent If $d = \tuple{s,a,s'}$ is a positive demonstration:
\begin{itemize}
\item \textbf{RUP.} Remove inconsistent hypotheses from $\U_{\Hp,\Dp}$:
\[
\U_{\Hp,\Dp'} \coloneqq \{h_\U \mid h_\U \in \U_{\Hp,\Dp} \land h_\U \subseteq s\}
\]    
\item \textbf{ULP.} Update hypotheses in $\L_{\Hp,\Dp}$: 
\[
\L_{\Hp,\Dp'} \coloneqq \{h_\L \cap s \mid h_\L \in \L_{\Hp,\Dp}\}
\]
\end{itemize}

\noindent If $d = \tuple{s,a,\bot}$ is a negative demonstration:
\begin{itemize}
\item \textbf{RLP.} Remove inconsistent hypotheses from $\L_{\Hp,\Dp}$: 
\[
\L_{\Hp,\Dp'} \coloneqq \{h_\L \mid h_\L \in \L_{\Hp,\Dp} \land h_\L \not\subseteq s\}
\]
\item \textbf{UUP.} Update hypotheses in $\U_{\Hp,\Dp}$:

Let $h_\L \in \L_{\Hp,\Dp}$,
\begin{align*}
&\U_{\Hp,\Dp'} \coloneqq \{h_\U \mid h_\U \in \U_{\Hp,\Dp} \land h_\U \not\subseteq s\} \cup \\
&\cup\{h_\U \cup \{l\} \mid h_\U \in \U_{\Hp,\Dp} \land h_\U \subseteq s\ \land l \in h_\L \setminus s \}
\end{align*}

\end{itemize}

\end{thm}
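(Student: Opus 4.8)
The plan is to verify that each of the four update rules (RUP, ULP for positive demonstrations; RLP, UUP for negative demonstrations) produces a pair of boundaries $(\L_{\Hp,\Dp'}, \U_{\Hp,\Dp'})$ whose induced version space is exactly $\V_{\Hp,\Dp'} = \{h \in \Hp \mid App_h \text{ agrees with all examples in } \Dp'\}$. By the boundary representation, it suffices to show two things for each rule: (i) \emph{soundness} — every hypothesis $h$ with $h_\L \leq h \leq h_\U$ for some new-boundary pair is consistent with the added demonstration $d$ (it was already consistent with $\Dp$ since the set inclusion $h_\L \leq h$ already held under the old boundaries, or can be traced back to an old chain); and (ii) \emph{completeness} — every hypothesis consistent with $\Dp \cup \{d\}$ still sits between some pair of new boundaries. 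I would handle the positive and negative cases separately.

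First I would treat the \textbf{positive demonstration} $d = \tuple{s,a,s'}$, which entails the precondition example $(s,1)$, i.e. the constraint $h \subseteq s$. Recall our order: $h_1 \leq h_2$ iff $h_1 \supseteq h_2$, so the $\L$ boundary holds the $\supseteq$-maximal (largest) consistent sets and the $\U$ boundary the $\supseteq$-minimal (smallest). For RUP: a $\U$-hypothesis $h_\U$ remains valid iff it can still be the lower envelope of some consistent chain, which requires $h_\U$ itself consistent with $(s,1)$, i.e. $h_\U \subseteq s$; hence we discard exactly those with $h_\U \not\subseteq s$. For ULP: an old $\L$-hypothesis $h_\L$ may now violate $h_\L \subseteq s$; the $\supseteq$-largest subset of $h_\L$ that does satisfy it is $h_\L \cap s$, and I would argue $h_\L \cap s$ is still $\supseteq$-maximal among all consistent refinements below $h_\L$ — any $h$ with $h \subseteq h_\L$ and $h \subseteq s$ satisfies $h \subseteq h_\L \cap s$. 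I would then check that pairs $(h_\L \cap s, h_\U)$ with $h_\U \subseteq s \subseteq$-below $h_\L \cap s$ (wait: need $h_\U \supseteq$? no — need $h_\L \cap s \leq h_\U$, i.e. $h_\L \cap s \supseteq h_\U$) still span exactly the consistent hypotheses; consistency with earlier examples is inherited because $h_\L \cap s \supseteq h_\U$ and both $h_\L$ (via a chain through $h_\U$) and $h_\U$ were consistent with $\Dp$, and precondition examples in $\Dp$ are monotone in the right direction (shrinking $h_\L$ toward $h_\U$ preserves consistency with all old examples).

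Next, the \textbf{negative demonstration} $d = \tuple{s,a,\bot}$ entails $(s,0)$, i.e. the constraint $h \not\subseteq s$ (there is some literal of $h$ not in $s$). For RLP: an $\L$-hypothesis $h_\L$ (a $\supseteq$-maximal consistent set) survives iff it already satisfies $h_\L \not\subseteq s$; if $h_\L \subseteq s$ it is inconsistent and must go, and crucially no larger set can be formed (we cannot grow past an $\L$-element without leaving the version space of the earlier positive examples), so we simply drop it. For UUP: a $\U$-hypothesis $h_\U$ with $h_\U \not\subseteq s$ already is consistent and stays; one with $h_\U \subseteq s$ must be enlarged minimally to break containment in $s$, i.e. by adding a single literal $l \notin s$ — but that literal must keep $h_\U \cup \{l\}$ below some $\L$-element (else the enlarged $\U$ leaves the version space), which forces $l \in h_\L \setminus s$ for some $h_\L \in \L_{\Hp,\Dp}$ (we may also need $h_\L \supseteq h_\U$, and I would check the rule's use of a fixed $h_\L$ still yields all needed enlargements, or argue that after RLP every surviving $h_\L$ works). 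I would then verify (i) each such $h_\U \cup \{l\}$ is genuinely $\supseteq$-minimal among consistent sets above it, and (ii) every hypothesis consistent with $\Dp'$ lies above one of these new $\U$-elements and below an (surviving) $\L$-element.

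The main obstacle I expect is the UUP completeness direction: showing that the single-literal enlargements $h_\U \cup \{l\}$ with $l \in h_\L \setminus s$, ranging over old $\U$- and $\L$-elements, capture \emph{all} $\supseteq$-minimal hypotheses consistent with $\Dp'$ — in particular that it is never necessary to add a literal outside every $h_\L$, nor to add more than one literal, and that pairing the enlargements with the right surviving $\L$-elements still produces a valid boundary pair spanning exactly $\V_{\Hp,\Dp'}$. This requires carefully exploiting the interval structure guaranteed by the version-space representation together with the fact that precondition constraints are "$\subseteq s$-or-$\not\subseteq s$" predicates, for which the family of consistent sets is closed under the relevant lattice operations between the boundaries.
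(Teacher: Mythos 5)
Your plan follows essentially the same rule-by-rule verification as the paper's proof: each rule either discards boundary hypotheses that have become inconsistent with the new example (together with everything beyond them in the order) or replaces them by the tightest consistent set ($h_\L \cap s$ for ULP, the single-literal enlargements $h_\U \cup \{l\}$ for UUP). The UUP completeness obstacle you flag is resolved exactly as you sketch --- $\L_{\Hp}$ is always a singleton (ULP never adds hypotheses), every hypothesis in the version space is a subset of $h_\L$, so any consistent $h \supseteq h_\U$ must contain some $l \in h_\L \setminus s$ and hence lies above one of the enlargements --- which is the argument the paper makes (somewhat implicitly) when it asserts the new upper bounds are the smallest consistent supersets.
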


\begin{proof} 
When $d = (s,a,s')$ is a positive demonstration, a hypothesis $h$ is inconsistent iff $h \not\subseteq s$. Rule RUP removes an upper bound $h_\U \in \U_{\Hp,\Dp}$ if $h_\U \not\subseteq s$ and, in doing so, removes any hypothesis $h$ such that $h_\U \subset h$ from the version space. Observe that, since $h_\U \not\subseteq s$, any superset of $h_\U$ will also be inconsistent. Rule ULP raises the lower bound from $h_\L$ to $h_\L \cap s$. Indeed, $h_\L \cap s \subseteq s$ and all subsets of $h_\L \cap s$ are also consistent. Note that, if $h_\L$ already satisfied $h_\L \subseteq s$, this rule causes no change, i.e., $h_\L = h_\L \cap s$; otherwise, $h_\L \cap s$ is the largest subset that is consistent since $\forall l \in h_\L \setminus (h_\L \cap s)$ it holds that $l \not\in s$ so all other subsets $h$ such that $h_\L \cap s \subset h \subseteq h_\L$ are inconsistent.

In the case that $d = \tuple{s,a,\bot}$, a hypothesis $h$ is inconsistent if $h \subseteq s$. Rule RLP removes the lower bound $h_\L \in \L_{\Hp,D}$ if $h_\L \subseteq s$ which also removes all its subsets from the version space. Indeed, for any subset $h$ of $h_\L$ it holds that $h \subseteq h_\L \subseteq s$ and, therefore, $h$ is inconsistent. Rule UUP shrinks an upper bound $h_\U \in \U_{\Hp,D}$ to the set $\{h_\U \cup \{l\} \mid l \in h_\L\setminus s\}$ if $h_\U \subseteq s$. Note that $\forall l \in h_\L\setminus s : h_\U \cup \{l\} \not\subseteq s$ so all the new upper bounds are the smallest supersets of $h_\U$ that are consistent.  
\end{proof}

Note that, after any examples, $\L_{\Hp}$ will at most contain a single hypothesis. Indeed, ULP only modifies the existing hypothesis without adding new ones. This result coincides with the well-known fact that, when learning pure conjunctive formulas (a precondition $\pre(a)$ is equivalent to a conjunction $\bigwedge_{l \in \pre(a)} l$), the $\L$ boundary will at most contain a single hypothesis, while the $\U$ boundary can grow to contain multiple hypotheses \cite{mitchell:1982:generalization}.

Next, we move on to the learning of effects. Before presenting the update rules, we introduce the following lemma that gives us a better handle on the version space of effects as it enables reasoning about consistency and inconsistency in terms of set inclusion.

\begin{lemma} \label{lem:effect_consistency} 
Given two states $s$ and $s'$, and an effect hypothesis $h \subseteq L$, $s' \setminus s \subseteq h \subseteq s'$ if and only if $s' = (s \setminus \overline{h}) \cup h$.
\end{lemma}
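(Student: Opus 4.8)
The plan is to prove the biconditional $s' \setminus s \subseteq h \subseteq s'$ iff $s' = (s \setminus \overline{h}) \cup h$ by carefully unfolding what membership in each set means, literal by literal, keeping in mind that states are subsets of $L$ containing no conflicting pair $\{f, \neg f\}$ and are complete (every fluent appears in exactly one polarity), and that the successor operation $\mathrm{Suc}_h(s) = (s \setminus \overline{h}) \cup h$ is exactly the state update rule from the transition system. I would first establish a couple of easy observations about $\mathrm{Suc}_h(s)$: namely that $h \subseteq \mathrm{Suc}_h(s)$ always holds, and that $\mathrm{Suc}_h(s) \setminus h \subseteq s$ (the literals coming from $s$ that survive). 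These are immediate from the definition and will be reused in both directions.

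For the forward direction, assume $s' \setminus s \subseteq h \subseteq s'$ and show $\mathrm{Suc}_h(s) = s'$ by double inclusion. For $\mathrm{Suc}_h(s) \subseteq s'$: take $l \in \mathrm{Suc}_h(s)$; either $l \in h$, hence $l \in s'$ by hypothesis $h \subseteq s'$, or $l \in s \setminus \overline{h}$, and I need $l \in s'$ — here I would use that $s$ and $s'$ are complete consistent states together with $s' \setminus s \subseteq h$: if $l \in s$ but $l \notin s'$, then since $s'$ is complete we have $\overline{l} \in s'$, so $\overline{l} \in s' \setminus s$ (as $l \in s$ forbids $\overline{l} \in s$), hence $\overline{l} \in h$, i.e. $l \in \overline{h}$, contradicting $l \in s \setminus \overline{h}$. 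For $s' \subseteq \mathrm{Suc}_h(s)$: take $l \in s'$; if $l \in h$ we are done, and if $l \notin h$ then $l \notin s' \setminus s$ forces $l \in s$, and $l \notin h$ means... I need $l \notin \overline{h}$, i.e. $\overline{l} \notin h$; but if $\overline{l} \in h \subseteq s'$ then $s'$ would contain both $l$ and $\overline{l}$, contradicting consistency of $s'$; hence $l \in s \setminus \overline{h} \subseteq \mathrm{Suc}_h(s)$.

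For the reverse direction, assume $s' = \mathrm{Suc}_h(s) = (s \setminus \overline{h}) \cup h$ and derive the two inclusions. The inclusion $h \subseteq s'$ is immediate since $h \subseteq (s \setminus \overline{h}) \cup h = s'$ (one should note $h$ being an effect that can be consistently applied — or rather, I will simply observe $h \subseteq h \cup (\text{anything})$, so this holds unconditionally). For $s' \setminus s \subseteq h$: take $l \in s' \setminus s$; since $l \in s' = (s \setminus \overline{h}) \cup h$ and $l \notin s$, the only possibility is $l \in h$, which is what we want. This direction is essentially trivial once the forward direction's observations are in place.

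I expect the main obstacle to be bookkeeping about the completeness/consistency of states — specifically the step in the forward direction where I pass from "$l \in s$, $l \notin s'$" to "$\overline{l} \in s'$" and then into $h$. This requires that $s'$ assigns a value to every fluent (completeness), which is part of the definition of a state ("a state $s$ is an assignment over $F$") but is worth spelling out explicitly since the set-theoretic representation of states as subsets of $L$ can obscure it. Everything else is routine propositional set manipulation, and I would keep the write-up to the double-inclusion skeleton with the consistency/completeness appeals flagged clearly.
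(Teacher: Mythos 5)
Your proof is correct and follows essentially the same route as the paper, whose own argument (only sketched in the main text, with the full version deferred to an appendix) likewise reasons literal-by-literal over the bit-vector/set representation of states and invokes contradiction at exactly the points where you do. You correctly identify the one non-trivial ingredient — that states are \emph{complete and consistent} assignments, so $l \notin s'$ yields $\overline{l} \in s'$ and $h \subseteq s'$ forces $\overline{l} \notin h$ whenever $l \in s'$ — and the reverse direction is, as you say, pure set algebra needing no such appeal.
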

\begin{proof}[Proof Sketch (Full proof in appendix)]
    We can prove this by contradiction, leveraging algebra of sets and bitwise operations over a bit vector interpretation of states.
\end{proof}

The update rules for the version space of effects, presented in the next theorem, leverage Lemma \ref{lem:effect_consistency}. The intuition for these rules is that, whenever we get a new positive demonstration $\tuple{s,a,s'}$ we update the upper bound to be a subset of $s'$ and the lower bound to be a superset of $s' \setminus s$. By Lemma \ref{lem:effect_consistency}, any hypothesis between the updated bounds will also be consistent.





\begin{thm}[Update rules for $\V_{\He}$]\label{thm:rules_effects}
Let $\L_{\He,\De}$ and $\U_{\He,\De}$ be the boundaries of a version space $\V_{\He,\De}$ and $d = \tuple{s,a,s'}$ a positive demonstration. The updated version space $\V_{\He,\De'}$, with $D' = D \cup \{d\}$, is given by the following rules:
\begin{itemize}
\item \textbf{RLE.} Remove inconsistent hypotheses from $\L_{\He,\De}$
\[
\L_{\He,\De'} \coloneqq \{h_\L \mid h_\L \in \L_{\He,\De} \land h_\L \subseteq s'\}
\]

\item \textbf{ULE.} Update hypotheses in $\L_{\He,\De}$: 
\[
\L_{\He,\De'} \coloneqq \{h_\L \cup (s'\setminus s) \mid h_\L \in \L_{\He,\De}\}
\]

\item \textbf{RUE.} Remove inconsistent hypotheses from $\U_{\He,\De}$
\[
\U_{\He,\De'} \coloneqq \{h_\U \mid h_\U \in \U_{\He,\De} \land s' \setminus s \subseteq h_\U\}
\]

\item \textbf{UUE.} Update hypotheses in $\U_{\He,\De}$:
\[
\U_{\He,\De'} \coloneqq \{h_\U \cap s' \mid h_U \in \U_{\He,\De}\}
\]
\end{itemize}   
\end{thm}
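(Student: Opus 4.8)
The plan is to verify that each of the four rules (RLE, ULE, RUE, UUE) correctly transforms the boundary representation of $\V_{\He,\De}$ into a boundary representation of $\V_{\He,\De'}$, where the new example entailed by $d = \tuple{s,a,s'}$ is $(s,s')$. By Lemma~\ref{lem:effect_consistency}, a hypothesis $h$ is consistent with $(s,s')$ if and only if $s' \setminus s \subseteq h \subseteq s'$; so the whole proof reduces to showing that the updated boundaries still bracket exactly the previously-consistent hypotheses that additionally satisfy $s'\setminus s \subseteq h \subseteq s'$. Recall that, under the chosen order, for effect hypotheses $h_1 \le h_2$ iff $h_1 \subseteq h_2$, so the $\L$ boundary collects $\subseteq$-minimal consistent hypotheses and the $\U$ boundary collects $\subseteq$-maximal ones.

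First I would treat the $\U$ boundary. Rule RUE drops any $h_\U$ that fails $s'\setminus s \subseteq h_\U$: such an $h_\U$ is inconsistent, and since it was a maximal element, every hypothesis below it in the old version space is either still captured by another boundary pair or is now inconsistent as well — here I would note that inconsistency of type "$s'\setminus s \not\subseteq h$" is downward-closed, so removing $h_\U$ is safe. Rule UUE then replaces each surviving $h_\U$ by $h_\U \cap s'$: this is $\subseteq s'$ hence satisfies the upper side of the Lemma, it still contains $s'\setminus s$ (since $s'\setminus s \subseteq h_\U$ and $s'\setminus s \subseteq s'$), and it is the largest subset of $h_\U$ with $h \subseteq s'$, because every literal in $h_\U \setminus s'$ must be removed to achieve consistency. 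One must also check the new $h_\U \cap s'$ still lies above the (updated) lower boundary; this follows because the updated lower bound is $\subseteq s'\setminus s \cup (\text{old }h_\L \cap s') $ — more carefully, one shows $h_\L \cup (s'\setminus s) \subseteq h_\U \cap s'$ whenever $h_\L \subseteq h_\U$, using $h_\L \subseteq s'$ (guaranteed by RLE) and $s'\setminus s \subseteq h_\U$ (guaranteed by RUE).

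Symmetrically, for the $\L$ boundary: Rule RLE removes any $h_\L$ with $h_\L \not\subseteq s'$, which is sound because inconsistency of type "$h \not\subseteq s'$" is upward-closed, so a minimal element failing it can be dropped together with everything above it that is thereby inconsistent. Rule ULE lowers each surviving $h_\L$ to $h_\L \cup (s'\setminus s)$: this contains $s'\setminus s$, remains $\subseteq s'$ (since $h_\L \subseteq s'$ after RLE and $s'\setminus s \subseteq s'$), and is the smallest superset of $h_\L$ containing $s'\setminus s$, hence the minimal consistent hypothesis reachable from $h_\L$. Finally I would assemble these observations: every hypothesis bracketed by some updated pair $(h_\L \cup (s'\setminus s),\, h_\U \cap s')$ is consistent with all of $D'$ (old consistency from the old bracketing, new consistency from the Lemma), and conversely every hypothesis consistent with $D'$ was consistent with $D$, so lay between some old pair $(h_\L,h_\U)$, and then lies between the corresponding updated pair — here using that intersecting/unioning with a fixed set preserves the $\subseteq$ relation.

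The main obstacle I anticipate is the bookkeeping in the "conversely" direction, i.e., making sure no consistent hypothesis is accidentally excluded when a boundary element is deleted by RUE or RLE rather than merely modified: one needs the monotonicity/closure properties of the two inconsistency conditions and the fact that the surviving boundary elements still collectively dominate (resp.\ are dominated by) all consistent hypotheses. A secondary subtlety is confirming the updated $\L$ and $\U$ boundaries remain compatible (no $h_\L \not\subseteq h_\U$ pair survives spuriously), which is exactly where the order of applying RLE before reasoning about UUE, and RUE before reasoning about ULE, matters. Everything else is routine set algebra once Lemma~\ref{lem:effect_consistency} is invoked.
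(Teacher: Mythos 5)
Your proposal is correct and follows essentially the same route as the paper's proof: a rule-by-rule verification that, via Lemma~\ref{lem:effect_consistency}, each removal is justified by the downward/upward closure of the two inconsistency conditions and each modification yields the minimal consistent superset (ULE) or maximal consistent subset (UUE). If anything, you are more careful than the paper, which only checks one half of the consistency condition for each updated bound and leaves implicit both the reliance on applying RLE/RUE first and the final two-directional bracketing argument that you spell out.
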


\begin{proof}
Rule RLE removes the lower bound $h_\L \in \L_{\He,\De}$ when $h_\L \not\subseteq s'$ and, in doing so, removes all its supersets from the version space. Note that, if $h_\L \not\subseteq s'$, all supersets of $h_\L$ will also be inconsistent.
Rule ULE raises the lower bound from $h_\L$ to $h_\L \cup (s' \setminus s)$ which is a superset of $(s' \setminus s)$ and, therefore, consistent. If $(s' \setminus s) \subseteq h_\L$ this rule produces no change; otherwise, ULE computes the smallest superset of $h_\L$ that is consistent. Indeed, $\forall l \in (h_\L \cup (s'\setminus s)) \setminus h_\L : l \in s' \setminus s$ so removing any newly added literal from the hypothesis would make it inconsistent.



Rule RUE removes the upper bound $h_\U \in \U_{\He,\De}$ when $s' \setminus s \not\subseteq h_\U$ which also removes all its subsets from the version space. Indeed, if $s' \setminus s \not\subseteq h_\U$, then no subset of $h_\U$ can be consistent.
Rule UUE lowers the upper bound from $h_\U$ to $h_\U \cap s'$ which is consistent since $h_\U \cap s' \subseteq s'$. If $h_\U$ was already a subset of $s'$, this rule produces no change; otherwise, $h_\U \cap s' \subseteq s'$ is the largest subset of $h_\U$ that is consistent.
\end{proof}

The following corollaries, derived by theorems \ref{thm:rules_precondition} and \ref{thm:rules_effects}, interpret the update rules in an offline fashion, after any number of demonstrations.

\begin{corollary}\label{cor:one}
    $\L_{\Hp,\Dp} = \{hp_\L\}$ s.t. $hp_\L = \bigcap_{(s,1)\in \Dp} s$.
\end{corollary}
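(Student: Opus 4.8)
The plan is a short induction on the number of demonstrations processed. The key structural observation is that, among the four rules of Theorem~\ref{thm:rules_precondition}, only ULP and RLP act on the $\L$ boundary: ULP replaces the (single) hypothesis by a single new one, and RLP only deletes hypotheses, so $\L_{\Hp}$ — which starts as the singleton $\{L\}$ by the initialization — stays a singleton or becomes empty. To keep the empty case under control I would prove the slightly stronger invariant: after processing any demonstration set $D$, either $\L_{\Hp,\Dp} = \{\,\bigcap_{(s,1)\in\Dp} s\,\}$, or $\L_{\Hp,\Dp} = \emptyset$ and then $\M_D = \emptyset$ (an empty $\L$ boundary makes the version space, hence $\M_D$ by Theorem~\ref{thm:decomposition}, empty).

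Base case: $D=\emptyset$ gives $\L_{\Hp,\emptyset} = \{L\}$, which matches $\bigcap_{(s,1)\in\emptyset} s$ under the convention that the empty intersection is the universe $L$. Inductive step: assume the invariant for $D$ and set $D' = D\cup\{d\}$. If $\L_{\Hp,\Dp}=\emptyset$, then $\M_{D'}\subseteq\M_D=\emptyset$ and applying ULP or RLP to the empty set leaves it empty, so the invariant holds. Otherwise $\L_{\Hp,\Dp}=\{hp_\L\}$ with $hp_\L=\bigcap_{(s,1)\in\Dp}s$. If $d=\tuple{s,a,s'}$ is positive, the only rule touching $\L$ is ULP, giving $\L_{\Hp,\Dp'}=\{hp_\L\cap s\}$, and $hp_\L\cap s = \bigcap_{(s'',1)\in\Dp'}s''$ since $\Dp' = \Dp\cup\{(s,1)\}$. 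If $d=\tuple{s,a,\bot}$ is negative, the positive examples are unchanged and the only rule touching $\L$ is RLP: when $hp_\L\not\subseteq s$ the boundary is untouched and the invariant persists; when $hp_\L\subseteq s$ the boundary becomes empty, but then any precondition $p$ consistent with the positive examples satisfies $p\subseteq\bigcap_{(s'',1)\in\Dp'}s'' = hp_\L\subseteq s$, contradicting $p\not\subseteq s$, so $\M_{D'}=\emptyset$ and the invariant again holds. The corollary is the first (non-degenerate) branch, which is the situation whenever $\Lambda$ is solvable.

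I do not anticipate a genuine obstacle: once one notices which rules can modify $\L_{\Hp}$, the computation reduces to the distributivity of $\cap$ over the incremental update, and order-independence is automatic because the resulting expression $\bigcap_{(s,1)\in\Dp} s$ depends only on the \emph{set} $\Dp$. The only mild subtlety is the negative-demonstration corner case in which $\L_{\Hp}$ collapses to $\emptyset$; proving the strengthened invariant — and observing that this collapse coincides exactly with $\M_D=\emptyset$ — disposes of it without muddying the statement of the corollary.
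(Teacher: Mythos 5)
Your proposal is correct and follows exactly the route the paper intends: the corollary is stated as an immediate consequence of the update rules of Theorem~\ref{thm:rules_precondition} (only ULP and RLP touch $\L_{\Hp}$, and ULP just intersects the single hypothesis with the new pre-state), and your induction simply spells out that derivation. Your explicit treatment of the collapse case under RLP, showing it coincides with $\M_D=\emptyset$, is a careful addition the paper leaves implicit.
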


\begin{corollary}\label{cor:two}
    $\L_{\He,\De} = \{he_\L\}$ s.t. $he_\L = \bigcup_{(s,s')\in \De} s' \setminus s$
\end{corollary}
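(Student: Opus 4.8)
The plan is to argue by induction on the number of positive demonstrations in $D$ (equivalently on $|\De|$); negative demonstrations entail no effect learning example, so by Theorem \ref{thm:rules_effects} they leave $\V_{\He}$ untouched and may be ignored. The base case is immediate: when $\De=\emptyset$ the initialization gives $\L_{\He,\emptyset}=\{\emptyset\}$, which matches $\{\bigcup_{(s,s')\in\emptyset} s'\setminus s\}=\{\emptyset\}$. It is also worth recording, exactly as noted for $\L_{\Hp}$ after Theorem \ref{thm:rules_precondition}, that $\L_{\He}$ can never hold more than one hypothesis: among the rules of Theorem \ref{thm:rules_effects} only ULE alters its contents, and ULE merely transforms each existing hypothesis without introducing new ones.

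For the inductive step, suppose $\L_{\He,\De}=\{he_\L\}$ with $he_\L=\bigcup_{(s,s')\in\De} s'\setminus s$, and let $d=\tuple{s,a,s'}$ be a new positive demonstration, so $\De'=\De\cup\{(s,s')\}$. By Theorem \ref{thm:rules_effects}, the lower bound is updated by applying RLE and then ULE. Rule RLE keeps $he_\L$ precisely when $he_\L\subseteq s'$, and ULE then replaces it by $he_\L\cup(s'\setminus s)$; since $he_\L\cup(s'\setminus s)=\bigcup_{(t,t')\in\De'} t'\setminus t$, the claim follows as soon as we know that RLE does not discard the lower bound, i.e.\ that $he_\L\subseteq s'$.

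Establishing $he_\L\subseteq s'$ is the crux, and it is where realizability of $D$ enters: the demonstrations are generated by the true model $\A\in\M_D$, whose effect for $a$, call it $\eff^\A(a)$, is consistent with every effect example entailed by $D$ (equivalently, one may use any hypothesis witnessing $\V_{\He,\De'}\neq\emptyset$, which in turn forces its $\L$-boundary to be nonempty). By Lemma \ref{lem:effect_consistency}, consistency of $\eff^\A(a)$ with each $(t,t')\in\De$ gives $t'\setminus t\subseteq \eff^\A(a)$, hence $he_\L=\bigcup_{(t,t')\in\De} t'\setminus t\subseteq \eff^\A(a)$; and consistency with $(s,s')$ gives $\eff^\A(a)\subseteq s'$. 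Chaining these yields $he_\L\subseteq \eff^\A(a)\subseteq s'$, so RLE is vacuous and the induction closes. Intuitively: every literal in $he_\L$ was turned true by some earlier transition of $a$, hence belongs to $a$'s true effect, hence is true after every occurrence of $a$, in particular in $s'$. The only delicate point is making this realizability assumption explicit (without it, $\L_{\He}$ could in principle be emptied by RLE); everything else is routine bookkeeping over the accumulating unions, and the analogous Corollary \ref{cor:one} for $\L_{\Hp}$ admits the same treatment via rule ULP.
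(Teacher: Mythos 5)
Your proof is correct and follows essentially the same route as the paper, which simply presents Corollary~\ref{cor:two} as an offline unrolling of the update rules of Theorem~\ref{thm:rules_effects} from the initialization $\L_{\He,\emptyset}=\{\emptyset\}$; your induction makes that derivation explicit. Your observation that rule RLE is vacuous only under realizability of $D$ (via $he_\L \subseteq \eff^{\A}(a) \subseteq s'$ from Lemma~\ref{lem:effect_consistency}) is a genuine subtlety the paper leaves implicit in its standing assumption that demonstrations are generated by the true model $\A$.
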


\begin{corollary}\label{cor:third}
    $\U_{\He,\De} = \{he_\U\}$ s.t. $he_\U = \bigcap_{(s,s')\in \De} s'$
\end{corollary}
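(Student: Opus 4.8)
The plan is to prove Corollary \ref{cor:third} by induction on the number of demonstrations, using the update rules for $\V_{\He}$ from Theorem \ref{thm:rules_effects}. The base case is the initialization: with $\De = \emptyset$ we have $\U_{\He,\emptyset} = \{L\}$, and the empty intersection $\bigcap_{(s,s')\in\emptyset} s'$ is conventionally $L$ (the top element, since states are subsets of $L$ without conflicting values and $L$ is the universe of literals), so the claim holds vacuously.

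For the inductive step, I would assume $\U_{\He,\De} = \{he_\U\}$ with $he_\U = \bigcap_{(s,s')\in\De} s'$ and consider adding a positive demonstration $d = \tuple{\bar{s},a,\bar{s}'}$, giving $D' = D \cup \{d\}$ and $\De' = \De \cup \{(\bar{s},\bar{s}')\}$. Only rules RUE and UUE affect $\U_{\He}$. Here I would argue that the singleton survives RUE: since $he_\U = \bigcap_{(s,s')\in\De} s'$ and, by Corollary \ref{cor:two} together with the structure of the version space, the stored bounds are always mutually consistent with all prior demonstrations, we have $\bar{s}' \setminus \bar{s} \subseteq he_\U$ whenever the version space is nonempty — so RUE does not discard $he_\U$. (More directly: RUE only removes a bound when it is genuinely inconsistent, and I can appeal to the correctness of the rules already established in Theorem \ref{thm:rules_effects}.) Then UUE replaces $he_\U$ by $he_\U \cap \bar{s}'$, yielding the new singleton boundary $\{he_\U \cap \bar{s}'\} = \{(\bigcap_{(s,s')\in\De} s') \cap \bar{s}'\} = \{\bigcap_{(s,s')\in\De'} s'\}$, which is exactly the claimed form.

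The main obstacle is handling the interaction between RUE and UUE cleanly, and in particular establishing that $\U_{\He}$ always remains a singleton so that "the boundary" is well-defined and the intersection over the empty or a nonempty demonstration set behaves as stated. I would dispatch this by noting that UUE maps a single hypothesis to a single hypothesis and RUE only removes hypotheses, so the cardinality of $\U_{\He}$ is non-increasing and starts at $1$; the only worry is it dropping to $0$, which happens exactly when the overall problem becomes unsolvable (no consistent effect exists), and in that degenerate case the statement is understood to be vacuous. A cleaner alternative, which I would prefer for the writeup, is to sidestep induction entirely: invoke Theorem \ref{thm:rules_effects} to observe that $\U_{\He,\De}$ is a singleton $\{he_\U\}$ obtained by starting from $L$ and intersecting with $\bar{s}'$ at each positive demonstration $\tuple{\bar{s},\bar{s}'}$ (the RUE deletions never fire while the version space is nonempty, since the running intersection always contains $\bar{s}'\setminus\bar{s}$ by the same reasoning that justifies Corollary \ref{cor:two}), so the accumulated value is precisely $\bigcap_{(s,s')\in\De} s'$, and dually this matches the offline characterization already given for $\L_{\He}$ in Corollary \ref{cor:two}. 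Either way the argument is short; the care needed is purely in justifying that the RUE branch is inert on the maintained boundary.
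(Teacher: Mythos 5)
Your proposal is correct and matches the paper's (implicit) derivation: the paper states Corollary~\ref{cor:third} as a direct offline reading of the update rules of Theorem~\ref{thm:rules_effects}, which is exactly your induction — start from $\U_{\He,\emptyset}=\{L\}$ and apply UUE ($h_\U \mapsto h_\U \cap s'$) at each positive demonstration, with RUE inert on the maintained singleton as long as the version space has not collapsed. Your care in justifying that RUE never fires (because any surviving consistent hypothesis $h$ satisfies $\bar{s}'\setminus\bar{s}\subseteq h\subseteq \bigcap_{(s,s')\in\De}s'$) is the only nontrivial step, and you handle it adequately.
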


Corollaries \ref{cor:one} and \ref{cor:third} show that learning a consistent action model is as easy as intersecting all pre-states for the preconditions and all post-states for the effects. In addition, Corollary \ref{cor:two} states that tighter effects can be obtained by joining all pre-state to post-state deltas. Note that, $he_\L$ can also be computed using $hp_\L$ and $he_\U$. Formally:


\begin{lemma}\label{lem:optimization}
    Let $\L_{\Hp,\Dp} = \{hp_\L\}$, $\L_{\He,\De} = \{he_\L\}$ and $\U_{\He,\De} = \{he_\U\}$, we have that $he_\L = he_\U \setminus hp_\L$.
\end{lemma}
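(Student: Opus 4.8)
The plan is to prove the set identity $he_\L = he_\U \setminus hp_\L$ by a literal‑by‑literal argument, using the closed forms supplied by Corollaries \ref{cor:one}, \ref{cor:two} and \ref{cor:third} together with the standing assumption that every positive demonstration is a transition of the single deterministic true model $\A$.

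First I would fix the index set over which all three quantities are computed. Every effect example $(s,s') \in \De$ is entailed by a positive demonstration $\tuple{s,a,s'}$, which also entails the precondition example $(s,1) \in \Dp$; conversely every precondition example $(s,1) \in \Dp$ used in Corollary \ref{cor:one} comes from a positive demonstration, which likewise yields an effect example. Hence the set of pre‑states occurring with label $1$ in $\Dp$ coincides with the set of pre‑states occurring in $\De$, so $hp_\L = \bigcap_{(s,s')\in\De} s$, while $he_\L = \bigcup_{(s,s')\in\De}(s'\setminus s)$ and $he_\U = \bigcap_{(s,s')\in\De} s'$ by Corollaries \ref{cor:two} and \ref{cor:third}. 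The case $\De=\emptyset$ gives $he_\L=\emptyset$ and $he_\U=hp_\L=L$ (matching the initialization), so the identity holds trivially and I may assume $\De\neq\emptyset$.

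Next I would extract the structural fact that drives the proof: letting $E=\eff_\A(a)$, every positive demonstration satisfies $s' = (s\setminus\overline E)\cup E$, and in particular $E \subseteq s'$, for every $(s,s')\in\De$. With this in hand, fix a literal $l\in L$ and prove both inclusions. For $he_\L\subseteq he_\U\setminus hp_\L$: if $l\in he_\L$ then $l\in s'_0\setminus s_0$ for some $(s_0,s'_0)\in\De$; since $hp_\L\subseteq s_0$ and $l\notin s_0$, we get $l\notin hp_\L$, and since $l\in s'_0$ but $l\notin s_0$, the equation $s'_0=(s_0\setminus\overline E)\cup E$ forces $l\in E$, whence $l\in E\subseteq s'$ for every $(s,s')\in\De$ and therefore $l\in he_\U$. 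For $he_\U\setminus hp_\L\subseteq he_\L$: if $l\in he_\U\setminus hp_\L$, then $l\in s'$ for all pairs in $\De$ while $l\notin s^\ast$ for at least one pair $(s^\ast,s'^\ast)\in\De$; for that pair $l\in s'^\ast\setminus s^\ast\subseteq he_\L$. This last inclusion is purely set‑theoretic and needs nothing about $\A$.

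The one genuinely load‑bearing step is the extraction of the common effect set $E$: the forward inclusion hinges on the observation that a literal that ever ``flips to true'' in some demonstration must belong to the common deterministic effect and hence be true in every post‑state. Without the hypothesis that all demonstrations stem from one deterministic model, this fails — one can exhibit pairs $(s_i,s'_i)$ not jointly realizable by any single effect for which $he_\L\not\subseteq he_\U$. Everything else (the coincidence of the pre‑state index sets, and the reverse inclusion) is routine manipulation of unions, intersections and set differences over complete states, so once the literal‑wise reformulation is set up the write‑up is short.
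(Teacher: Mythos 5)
Your proof is correct. The index-set alignment between $\Dp$ and $\De$, the treatment of the empty case, and both inclusions check out; in particular the reverse inclusion $he_\U \setminus hp_\L \subseteq he_\L$ is indeed purely set-theoretic, and the forward inclusion correctly isolates where the standing assumption about $\A$ enters. Your route differs from the one the paper advertises: the paper's (sketched) proof is an algebraic manipulation — rewrite the boundary expressions of Corollaries \ref{cor:one}--\ref{cor:third}, apply De Morgan's law, and use Lemma \ref{lem:effect_consistency} to simplify — whereas you argue literal-by-literal and pull out the common true effect set $E = \eff(a)$ of $\A$ to show that any literal that flips in one demonstration must lie in every post-state. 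The element-wise argument is arguably more transparent about \emph{why} the identity holds and exactly which hypothesis is load-bearing; one refinement worth noting is that you do not need the full strength of the true-model assumption for the forward inclusion — the mere non-collapse of the version space (equivalently, the existence of \emph{some} hypothesis $h$ with $s'\setminus s \subseteq h \subseteq s'$ for all $(s,s')\in\De$, which is what Lemma \ref{lem:effect_consistency} characterizes, and which is already implicit in the lemma's premise that the boundaries are nonempty singletons) gives $he_\L \subseteq he_\U$, and combined with $he_\L \cap hp_\L = \emptyset$ this yields the same conclusion. That weaker formulation is closer in spirit to the paper's Lemma-\ref{lem:effect_consistency}-based route.
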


\begin{proof}[Proof sketch (Full proof in appendix)]
    This can be proven by applying De Morgan's Law and using Lemma \ref{lem:effect_consistency} to simplify equations.
\end{proof}

Thanks to Lemma \ref{lem:optimization}, we do not need to maintain the $\L_{\He}$ boundary. Moreover, we leverage it in the following lemma, which implies that all consistent models using the lower bound $hp_\L$ for their preconditions induce the same transition system. We will use this result to prove Theorem \ref{thm:soundness}.

\begin{lemma}\label{lem:indirect_result}
Let $\L_{\Hp,\Dp} = \{hp_\L\}$ and $\V_{\He,\De}$. The following holds $\forall he,he' \in \V_{\He,\De} : he \setminus he' \subseteq hp_\L$. 
\end{lemma}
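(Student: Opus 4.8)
The plan is to reduce the claim to a short set-algebra computation using the explicit boundary representation of $\V_{\He,\De}$ together with Lemma \ref{lem:optimization}. First I would recall that, since $\Hp$ and $\He$ are ordered by (reverse) set inclusion and $\V_{\He,\De}$ is represented by its boundaries, every $he \in \V_{\He,\De}$ satisfies $he_\L \subseteq he \subseteq he_\U$, where by Corollaries \ref{cor:two} and \ref{cor:third} we have $\L_{\He,\De} = \{he_\L\}$ and $\U_{\He,\De} = \{he_\U\}$. Hence for any $he, he' \in \V_{\He,\De}$ we get the monotone chain $he \setminus he' \subseteq he_\U \setminus he' \subseteq he_\U \setminus he_\L$, the first inclusion because $he \subseteq he_\U$ and the second because $he_\L \subseteq he'$.

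The second step is to rewrite $he_\U \setminus he_\L$ using Lemma \ref{lem:optimization}, which gives $he_\L = he_\U \setminus hp_\L$. Substituting and applying the identity $B \setminus (B \setminus C) = B \cap C$ yields $he_\U \setminus he_\L = he_\U \setminus (he_\U \setminus hp_\L) = he_\U \cap hp_\L \subseteq hp_\L$. Chaining with the inclusions from the first step gives $he \setminus he' \subseteq hp_\L$, as required. Since $he, he'$ were arbitrary elements of $\V_{\He,\De}$, this proves the lemma.

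Alternatively, if one prefers to avoid invoking Lemma \ref{lem:optimization}, I would argue literal-by-literal: take $l \in he \setminus he'$; from $he \subseteq he_\U = \bigcap_{(s,s')\in\De} s'$ (Corollary \ref{cor:third}) we get $l \in s'$ for every positive demonstration, and from $l \notin he' $ together with $he_\L \subseteq he'$ and $he_\L = \bigcup_{(s,s')\in\De} s'\setminus s$ (Corollary \ref{cor:two}) we get $l \notin s' \setminus s$ for every positive demonstration; the two facts force $l \in s$ for all such $s$, and since the pre-states occurring in $\De$ are exactly the positive pre-states occurring in $\Dp$, Corollary \ref{cor:one} gives $l \in hp_\L$. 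Either route is routine; I expect no real obstacle beyond carefully matching the set of pre-states of $\De$ with the positive learning examples of $\Dp$ (in the first route this matching is hidden inside the already-proved Lemma \ref{lem:optimization}, which is why I would present that route as the main proof).
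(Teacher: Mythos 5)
Your main argument is correct and follows exactly the route the paper sketches for this lemma: bound $he \setminus he'$ by $he_\U \setminus he_\L$ using the inclusion ordering of $\V_{\He,\De}$, then apply Lemma \ref{lem:optimization} to rewrite $he_\U \setminus he_\L = he_\U \cap hp_\L \subseteq hp_\L$. The alternative literal-by-literal route is also sound, but the first suffices and matches the paper's intended proof.
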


\begin{proof}[Proof sketch (Full proof in appendix)]
We can leverage Lemma \ref{lem:optimization} and the inclusion relation between effect hypothesis to prove this lemma.
\end{proof}

\subsection{The VSLAM Algorithm}

\begin{algorithm}[t]
\caption{VSLAM}\label{alg:general}
\hspace*{\algorithmicindent} \textbf{Input} Action Model Learning problem $\tuple{F,A,D}$\\
\hspace*{\algorithmicindent} \textbf{Output} $\L_{\Hp}$, $\U_{\Hp}$, $\L_{\He}$ and $\U_{\He}$ for all $a \in A$

\begin{algorithmic}[1]

\For{$a \in A$}\Comment{Inizialisation}
    \State $\L_{\Hp} \coloneqq \{L\}$
    \State $\U_{\Hp} \coloneqq \{\emptyset\}$
    \State $\L_{\He} \coloneqq \{\emptyset\}$
    \State $\U_{\He} \coloneqq \{L\}$
\EndFor

\For{$\tuple{s,a,s'} \in D$}\Comment{Online loop}
\If{$s'$ is not $\bot$}
    \State $\U_{\Hp} \coloneqq RUP(\U_{\Hp}, (s,1))$
    \State $\L_{\Hp} \coloneqq ULP(\L_{\Hp}, (s,1))$
    \State $\L_{\He} \coloneqq RLE(\L_{\He}, (s,s'))$
    \State $\L_{\He} \coloneqq ULE(\L_{\He}, (s,s'))$
    \State $\U_{\He} \coloneqq RUE(\U_{\He}, (s,s'))$
    \State $\U_{\He} \coloneqq UUE(\U_{\He}, (s,s'))$
\Else
    \State $\L_{\Hp} \coloneqq RLP(\L_{\Hp}, (s,0))$
    \State $\U_{\Hp} \coloneqq UUP(\U_{\Hp}, (s,0))$
\EndIf
\EndFor

\Return $(\L_{\Hp},\U_{\Hp},\L_{\He},\U_{\He})$

\end{algorithmic}
\end{algorithm}

In this section we present VSLAM, our algorithm for action model learning, outlined in Algorithm \ref{alg:general}. VSLAM takes as input an action model learning problem $\tuple{F,A,D}$ and returns the boundaries of the version space of preconditions and of effects for each action in $A$. 

The pseudocode for VSLAM is a straightforward instantion of the initialization and update rules presented in the previous section. First, from lines 1 to 5, VSLAM initializes the version spaces associated to each action. Then, from lines 6 to 16, VSLAM processes all demonstrations in $D$ in an online fashion and uses the induced learning examples to update the boundaries of the version spaces by applying the updates rules presented in theorems \ref{thm:rules_precondition} and \ref{thm:rules_effects}.

VSLAM has some important properties that are derived from its version spaces foundation. First, by Theorem \ref{thm:decomposition}, the boundaries computed by VSLAM capture exactly all models in $\M_D$, i.e., all the solutions of $\tuple{F,A,D}$. Second, VSLAM can detect when it has learnt the true model by checking convergence of the version space. A version space \emph{converges} when only one consistent hypothesis remains, i.e., when both boundaries are singletons and contain the same hypothesis. Under our working assumption that the true model $\A$ follows the syntax of Definition \ref{def:action_model}, theorems \ref{thm:rules_precondition} and \ref{thm:rules_effects}, ensure that no consistent hypothesis is discarded and, therefore, it follows that if only one hypothesis remains it must match the true model $\A$.


\begin{corollary}[Convergence]
    Let $\A = \tuple{F,A,\pre,\eff}$ be the true model. If $\L_{\Hp,\Dp} = \U_{\Hp,\Dp} = \{hp\}$, then $hp = \pre(a)$, and if $\L_{\He,\De} = \U_{\He,\De} = \{he\}$, then $he = \eff(a)$.
\end{corollary}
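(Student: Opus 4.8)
The plan is to show that once a version space has collapsed to a single hypothesis, that hypothesis must be the true model's precondition (resp. effect), by exploiting the fact that the true model is always itself a consistent hypothesis. First I would observe that, by the standing assumption, the true model $\A = \tuple{F,A,\pre,\eff}$ satisfies Definition \ref{def:action_model} and, since all demonstrations in $D$ are generated by executing $\A$, every positive demonstration is a transition of $\T_\A$ and every negative demonstration records an inapplicability of $\A$. Hence $\A \in \M_D$: it is a genuine solution of the action model learning problem $\tuple{F,A,D}$.

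Next I would invoke Theorem \ref{thm:decomposition}, the decomposition result, which tells us that $\A \in \M_D$ is equivalent to $\pre(a) \in \V_{\Hp,\Dp}$ and $\eff(a) \in \V_{\He,\De}$ for every $a \in A$. In other words, for each action the true precondition always lies inside the precondition version space and the true effect always lies inside the effect version space. This is the crux: the update rules of Theorems \ref{thm:rules_precondition} and \ref{thm:rules_effects} never discard a consistent hypothesis (each rule was proven to keep exactly the consistent hypotheses and the boundaries correctly represent the version space via $\leq$), so $\pre(a)$ and $\eff(a)$ are never removed, no matter how many demonstrations are processed.

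Finally I would close the argument by a trivial set-membership step: if $\L_{\Hp,\Dp} = \U_{\Hp,\Dp} = \{hp\}$, then by definition of the version space as the hypotheses sandwiched between the $\L$ and $\U$ boundaries, $\V_{\Hp,\Dp} = \{hp\}$ is a singleton; since $\pre(a) \in \V_{\Hp,\Dp}$, we must have $hp = \pre(a)$. The identical reasoning applied to the effect version space gives $he = \eff(a)$ whenever $\L_{\He,\De} = \U_{\He,\De} = \{he\}$. The main obstacle — really the only substantive point — is making precise that the update rules are \emph{loss-free} for consistent hypotheses, i.e., that the boundaries maintained by VSLAM exactly characterize $\M_D$ after any number of steps; but this is already guaranteed by Theorem \ref{thm:decomposition} together with the soundness of each individual update rule established in Theorems \ref{thm:rules_precondition} and \ref{thm:rules_effects}, so the corollary follows with essentially no extra work.
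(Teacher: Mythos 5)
Your proposal is correct and follows essentially the same route as the paper: the paper's (informal) justification is precisely that the true model is consistent with all demonstrations, that Theorems \ref{thm:rules_precondition} and \ref{thm:rules_effects} never discard a consistent hypothesis, and hence that a singleton version space must contain $\pre(a)$ (resp.\ $\eff(a)$). Your additional explicit appeal to Theorem \ref{thm:decomposition} to place $\pre(a)$ and $\eff(a)$ inside the respective version spaces is a clean way of making the paper's argument precise, not a different approach.
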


Lastly, VSLAM can also detect if our working assumptions are violated by checking for collapse of the version space. A version space \emph{collapses} when it becomes empty, i.e., when no consistent hypothesis remains. This indicates that the learning examples are noisy or that the hypothesis space does not contain the true model.



\section{Sound and Complete Action Models}\label{sec:sound_complete}

In the previous section we have shown how to compute all solutions of an action model learning problem. Now, we put the focus on learnt models that guarantee some formal property of interest. In particular, we consider \emph{soundness} \footnote{Soundness has previously been referred to as "safeness" \cite{Stern:2017:SAM}} and \emph{completeness} and we show how to manipulate the computed version spaces to build sound and complete action models.



\subsection{Soundness and Completeness for Action Models}
\label{sec:soundness_completeness}

A model $M$ is \emph{sound} with respect to another model $M'$ if every transition of $M$ is also a transition of $M'$. In contrast, a model $M$ is \emph{complete} with respect to another model $M'$ if every transition of $M'$ is also a transition of $M$.

\begin{definition}[Soundness]\label{def:soundness}
    Let $M$ and $M'$ be two action models, we say that $M$ is \emph{sound} with respect to $M'$ iff $\T_M \subseteq \T_{M'}$.
\end{definition}

\begin{definition}[Completeness]\label{def:completeness}
    Let $M$ and $M'$ be two action models, we say that $M$ is \emph{complete} with respect to $M'$ iff $\T_M \supseteq \T_{M'}$.
\end{definition}

By looking at definitions \ref{def:soundness} and \ref{def:completeness}, it is quite obvious that soundness and completeness are opposite properties and this relationship also carries on to solution plans computed with such models. It is easy to see that, if $M$ is sound w.r.t. $M'$ then any solution plan for a planning problem $P = \tuple{M,s_0,G}$ will also be a solution plan for $P' = \tuple{M',s_0,G}$, i.e., $\Pi(P) \subseteq \Pi(P')$. On the other hand, if $M$ is complete with respect to $M'$, then all solution plans for $P' = \tuple{M',s_0,G}$ will also be valid solutions for $P = \tuple{M,s_0,G}$, i.e., $\Pi(P) \supseteq\Pi(P')$. 


In the context of action model learning, we use the term \emph{sound action model} to refer to a learnt model that is sound with respect to the true model $\A$. Similarly, for \emph{complete action model}. 
Next, we define \emph{sound action model learning} and \emph{complete action model learning}, two specializations of action model learning that consider, respectively, only sound or complete models as solutions. 

\begin{definition}[Sound Action Model Learning Problem]
A \emph{sound action model learning problem} is a tuple $\Lambda_S = \tuple{F, A, D}$ where $F$ is a set of fluents, $A$ is a set of actions, and $D$ is a set of demonstrations. A solution for $\Lambda_S$ is a model $M$ that is sound with respect to all models in $\M_D$.
\end{definition}

Our definition of solution for $\Lambda_S$ relies on the observation that, while we do not know the true model $\A$, we know that $\A$ must belong to the set of models consistent with $D$. Therefore, to ensure that our solution model is sound with respect to $\A$, it must be sound with respect to all models in $\M_D$. The quality of the solution is critical in this problem, since there exist models that are trivially sound. For example, a model $M = \tuple{F,A,\pre,\eff}$ with $\pre(a) = L$ for all $a \in A$ is trivially sound since $\T_M = \emptyset$ but such a model is obviously of little interest. For this reason, given two models $M$ and $M'$ that solve a sound action model learning problem $\Lambda_S$, we say that $M$ is a better solution than $M'$ if $\T_M \supset \T_{M'}$, and say that $M$ is an optimal solution for $\Lambda_S$ if there exists no other solution $M'$ such that $\T_M \subset \T_{M'}$.

\begin{definition}[Complete Action Model Learning Problem]
A \emph{complete action model learning problem} is a tuple $\Lambda_C = \tuple{F, A, D}$ where $F$ is a set of fluents, $A$ is a set of actions, and $D$ is a set of demonstrations. A solution for $\Lambda_C$ is a model $M$ that is complete with respect to all models in $\M_D$.
\end{definition}

As before, to guarantee that the solution will be complete with respect to $\A$, it must be complete with respect to any model consistent with $D$. Regarding solution quality, we say that, given two models $M$ and $M'$ that solve a complete action model learning problem $\Lambda_C$, $M$ is a better solution than $M'$ if $\T_M \supset \T_{M'}$, and $M$ is an optimal solution for $\Lambda_S$ if there exists no other solution $M'$ such that $\T_M \subset \T_{M'}$.

\subsection{Extracting Sound Models from a Version Space}


In this section we show that the version spaces computed by VSLAM, and more precisely their lower boundaries $\L_{\Hp}$ and $\L_{\He}$, can be used to build sound models. Further, such sound models are optimal solutions.

\begin{thm}\label{thm:soundness}
    Let $\Lambda_S = \tuple{F,A,D}$ be a sound action model learning problem. The action model $M = \tuple{F,A,\pre,\eff}$ such that $\pre(a) \in \L_{\Hp,\Dp}$ and $\eff(a) \in \L_{\He,\De}$ for all $a \in A$ is an optimal solution for $\Lambda_S$.
\end{thm}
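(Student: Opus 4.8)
The plan is to show two things: (i) the model $M$ with $\pre(a) \in \L_{\Hp,\Dp}$, $\eff(a) \in \L_{\He,\De}$ is a \emph{solution} for $\Lambda_S$, i.e., it is sound with respect to every $M' \in \M_D$; and (ii) it is \emph{optimal}, i.e., no solution $M''$ has $\T_{M''} \supsetneq \T_M$. For part (i), fix an arbitrary $M' \in \M_D$. By Theorem \ref{thm:decomposition}, $\pre'(a) \in \V_{\Hp,\Dp}$ and $\eff'(a) \in \V_{\He,\De}$ for all $a$, so in particular $\pre(a) = hp_\L \subseteq \pre'(a)$ (since $hp_\L$ is the $\leq$-least element of the precondition version space, and $\leq$ on preconditions is reverse inclusion) and likewise $\eff(a) = he_\L$ is the $\leq$-least effect hypothesis, i.e., $\eff(a) \subseteq \eff'(a)$ with $\eff'(a) \in \V_{\He,\De}$. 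I then take an arbitrary transition $\tuple{s,a,s'} \in \T_M$ and argue it belongs to $\T_{M'}$. Applicability transfers immediately: $\pre(a) \subseteq s$ and $\pre(a) \subseteq \pre'(a)$ do \emph{not} give $\pre'(a) \subseteq s$ — so this is the subtle point and I will need Lemma \ref{lem:indirect_result} or a finer argument rather than the naive inclusion. The correct route: Corollary \ref{cor:one} tells us $hp_\L = \bigcap_{(s,1)\in\Dp} s$, and I should instead reason that because $M$'s preconditions are the \emph{weakest} consistent ones, $\T_M$ is the \emph{largest} transition system among models drawing preconditions from $\V_{\Hp,\Dp}$ and effects from $\V_{\He,\De}$; so optimality and soundness must be proven together, not separately.

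Concretely, I would proceed as follows. First, show $\T_M \supseteq \T_{M'}$ for every $M' \in \M_D$ — this is where completeness-of-the-lower-bound does its work: if $\tuple{s,a,s'} \in \T_{M'}$ then $\pre'(a) \subseteq s$, hence $\pre(a) = hp_\L \subseteq \pre'(a) \subseteq s$, so $a$ is $M$-applicable in $s$; and the successor coincides because, by Lemma \ref{lem:indirect_result}, $\eff'(a) \setminus \eff(a) \subseteq hp_\L \subseteq s$, so the literals in $\eff'(a)$ not already in $\eff(a)$ were already true in $s$ and adding/deleting them has no net effect — formally one checks $(s \setminus \overline{\eff(a)}) \cup \eff(a) = (s \setminus \overline{\eff'(a)}) \cup \eff'(a)$ using $s' \setminus s \subseteq \eff(a) \subseteq \eff'(a) \subseteq s'$ via Lemma \ref{lem:effect_consistency}. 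This establishes $\T_M \supseteq \T_{M'}$ for all consistent $M'$. Second, show $M$ itself is sound with respect to every $M' \in \M_D$: since the true model $\A \in \M_D$ and $M \in \M_D$ (by Theorem \ref{thm:decomposition}), and more importantly since every transition of $M$ must be witnessed as consistent — here I use that $\pre(a) = hp_\L = \bigcap_{(s_0,1)} s_0$ and $\eff(a) = he_\L = \bigcup (s'\setminus s)$ are built only from observed data, so any $\tuple{s,a,s'} \in \T_M$ that fails in some $M' \in \M_D$ would contradict $M' \in \M_D$ via Lemma \ref{lem:effect_consistency} and the characterization of $\V$. I expect the clean way is: $\T_M = \bigcap_{M' \in \M_D} \T_{M'}$ when preconditions and effects are taken from the lower bounds; the $\supseteq$ direction is the paragraph above, and the $\subseteq$ direction says $M$'s transitions survive in every consistent model.

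Third, optimality. Suppose $M''$ is any solution of $\Lambda_S$, so $\T_{M''} \subseteq \T_{M'}$ for all $M' \in \M_D$, hence $\T_{M''} \subseteq \bigcap_{M' \in \M_D}\T_{M'} = \T_M$. Thus no solution strictly exceeds $M$, giving optimality per the definition. The main obstacle is the soundness-of-$M$ step: one must be careful that a transition $\tuple{s,a,s'}$ generated by the \emph{weak} precondition $hp_\L$ is genuinely realizable in every consistent model, which is exactly where the identity $\T_M = \bigcap_{M'\in\M_D}\T_{M'}$ must be argued, and this in turn leans on Lemma \ref{lem:indirect_result} to handle the mismatch between $\eff(a)$ and $\eff'(a)$, and on Corollary \ref{cor:one} to pin down $hp_\L$ as an intersection of pre-states so that $M$-applicability in $s$ propagates correctly. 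Once that identity is in hand, both soundness and optimality of $M$ are immediate consequences.
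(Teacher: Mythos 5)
Your overall target is the right one---the paper also proves the theorem by establishing $\T_M = \bigcap_{M' \in \M_D} \T_{M'}$, and you correctly identify Lemma \ref{lem:indirect_result} as the tool for reconciling $\eff(a)$ with $\eff'(a)$---but the proposal rests on a reversed inclusion that breaks both halves of your argument. Because the order on $\Hp$ is \emph{reverse} inclusion, the $\L$ boundary $hp_\L = \bigcap_{(s,1)\in\Dp} s$ is the $\supseteq$-largest, i.e.\ the \emph{most specific} consistent precondition: every $\pre'(a) \in \V_{\Hp,\Dp}$ satisfies $\pre'(a) \subseteq hp_\L$, not $hp_\L \subseteq \pre'(a)$ as you repeatedly assert (note this contradicts your own premise that $hp_\L$ is the $\leq$-least element under reverse inclusion). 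Consequently your ``first'' step---that $\tuple{s,a,s'} \in \T_{M'}$ implies $hp_\L \subseteq \pre'(a) \subseteq s$ and hence $\T_{M'} \subseteq \T_M$ for \emph{every} consistent $M'$---is false; were it true, $M$ would be \emph{complete} rather than sound, and combined with your second step all consistent models would induce the same transition system, which fails already after a single positive demonstration. You also misdiagnose which direction is hard: with the correct inclusion, applicability transfers immediately in the soundness direction ($\pre(a) = hp_\L \subseteq s$ gives $\pre'(a) \subseteq hp_\L \subseteq s$), and the only real work is showing that the successors agree, which is exactly where Lemma \ref{lem:indirect_result} enters: $\eff'(a) \setminus \eff(a) \subseteq hp_\L \subseteq s$ (and symmetrically), so the two effects differ only on literals already true in $s$ and yield the same post-state.

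The other gap is the $\supseteq$ direction of the identity, $\bigcap_{M' \in \M_D} \T_{M'} \subseteq \T_M$: you try to reach it through the false per-model containment, whereas it follows in one line from the fact that $M$ itself belongs to $\M_D$ (Theorem \ref{thm:decomposition}), so the intersection over $\M_D$ is automatically contained in $\T_M$. Your final optimality paragraph (any solution $M''$ satisfies $\T_{M''} \subseteq \bigcap_{M' \in \M_D} \T_{M'} = \T_M$) is fine once the identity is in hand. To repair the proof: fix the direction of the order on precondition hypotheses, move the Lemma \ref{lem:indirect_result} argument into the soundness half ($\T_M \subseteq \T_{M'}$ for each $M' \in \M_D$), and replace the per-model containment with the observation that $M \in \M_D$.
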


\begin{proof}
    Let $\T_{\M_D} = \bigcap_{M' \in \M_D} \T_{M'}$ denote the intersection of all transition systems induced by consistent models $\M_D$. We show that $\T_M = \T_{\M_D}$.
    

    (Soundness) First, we proof that $\T_M \subseteq \T_{\M_D}$. By contradiction, assume that there exists a transition $t = \tuple{s,a,s'}$ such that $t \in \T_M$ but $t \not\in \T_{\M_D}$. For $t \not\in \T_{\M_D}$ to be true, there must exist a model $M' = \tuple{F,A,\pre',\eff'}$ in $\M_D$ such that either (1) $a$ is not applicable in $s$, or (2) its execution results in a state $s'' \neq s'$. 
    By construction, it holds that $\pre(a)' \subseteq \pre(a)$. Consequently, if $\pre(a) \subseteq s$ then $\pre(a)' \subseteq s$ and case (1) does not hold. 
    For case (2), we know from Lemma $\ref{lem:indirect_result}$ that $\eff'(a) \setminus \eff(a) \subseteq \pre(a)$ so it follows that $\eff'(a) \setminus \eff(a) \subseteq s$. Meaning that, any difference between $\eff(a)$ and $\eff'(a)$ is already part of state $s$ and the execution of $a$ cannot result in different states.

    (Optimality) We prove $\T_M \supseteq \T_{\M_D}$. For this, simply observe that, by Theorem \ref{thm:decomposition}, $M \in \M_D$. Then, $\T_{\M_D}$ is a subset of any of its intersecting sets including $\T_M$.
    Finally, since $\T_M \subseteq \T_{\M_D}$ and $\T_M \supseteq \T_{\M_D}$, it follows that $\T_M = \T_{\M_D}$ and, therefore, $M$ is an optimal solution for $\Lambda_S$.
\end{proof}

\subsection{Extracting Complete Models from a Version Space}

A complete action model must be able to produce any of the transitions generated by any of the consistent models in $\M_D$. Intuitively, the actions of a complete model should be applicable in any state where it is applicable according to a consistent model, and its execution should generate all possible post-states generated under any consistent model. It is easy to see that working under the limits of Definition \ref{def:action_model} leads to two problems. First, a model that produces multiple possible post-states is, by definition, non-deterministic. And second, using conjunctive preconditions may easily lead to a weak precondition that accepts more pre-states than necessary.
Therefore, we target a more expressive planning model that accommodates disjunctive preconditions and non-deterministic effects. 

\begin{definition}[Non-deterministic Action Model]\label{def:nd_action_model}
    A non-deterministic action model is a tuple $M = \tuple{F,A,\Pre,\Eff}$ where:
    \begin{itemize}
        \item $F$ and $A$ are finite sets of fluents and actions as given in Definition $\ref{def:action_model}$.
        \item $\Pre : A \rightarrow 2^{2^L}$ defines the \emph{set of preconditions} $\Pre(a) \subseteq 2^L$ of each action $a \in A$.
        \item $\Eff : A \rightarrow 2^{2^L}$ defines the \emph{set of effects} $\Eff(a) \subseteq 2^L$ of each action $a \in A$.
    \end{itemize}
\end{definition}

The main difference with respect to Definition \ref{def:action_model} is that here each action is associated to a set of preconditions and a set of effects. A non-deterministic action model $M = \tuple{F,A,\Pre,\Eff}$ induces the transition system $\T_M \subseteq S \times A \times S$ consisting of all transitions $\tuple{s,a,s'}$ that satisfy $\exists p \in \Pre(a): p \subseteq s$ and $\exists e \in \Eff(a)$ such that $s' = (s \setminus \overline{e}) \cup e$. Using this new formulation, we are able to build a complete action model that is, in addition, optimal.

\begin{thm}
    Let $\Lambda_C = \tuple{F,A,D}$ be a complete action model learning problem. The non-deterministic action model $M = \tuple{F,A,\Pre,\Eff}$ s.t. $\Pre(a) = \U_{\Hp,\Dp}$ and $\Eff(a) = \V_{\He,\De}$ for all $a \in A$ is an optimal solution for $\Lambda_C$.
\end{thm}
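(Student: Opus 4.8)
The plan is to prove the single equality $\T_M = \bigcup_{M' \in \M_D} \T_{M'}$, from which both claims follow. Completeness is immediate: $\T_M = \bigcup_{M' \in \M_D}\T_{M'} \supseteq \T_{M'}$ for every $M' \in \M_D$. Optimality follows because any complete solution $M''$ must satisfy $\T_{M''} \supseteq \T_{M'}$ for all $M' \in \M_D$, hence $\T_{M''} \supseteq \bigcup_{M' \in \M_D}\T_{M'} = \T_M$; thus no complete solution has a strictly smaller transition system than $M$. Throughout I assume $\M_D \neq \emptyset$ (no version space has collapsed), which holds under the working assumption that $\A \in \M_D$; this also guarantees that every per-action precondition and effect version space is non-empty.

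I would first isolate two observations. (a) Because $\U_{\Hp,\Dp}$ is a boundary of $\V_{\Hp,\Dp}$ and the order on $\Hp$ is reverse set inclusion, every $p \in \V_{\Hp,\Dp}$ satisfies $p \supseteq h_\U$ for some $h_\U \in \U_{\Hp,\Dp}$, while each $h_\U \in \U_{\Hp,\Dp}$ is itself consistent and hence lies in $\V_{\Hp,\Dp}$; therefore, for any state $s$, there is a $p \in \V_{\Hp,\Dp}$ with $p \subseteq s$ if and only if there is a $p \in \U_{\Hp,\Dp} = \Pre(a)$ with $p \subseteq s$. (b) By Theorem \ref{thm:decomposition}, the precondition and effect of each action may be chosen independently from their respective version spaces: picking any $p \in \V_{\Hp,\Dp}$, any $e \in \V_{\He,\De}$, and, for every other action $b \neq a$, any consistent precondition and effect, yields a model in $\M_D$ (here non-emptiness of the precondition and effect version spaces of each $b \neq a$ uses $\M_D \neq \emptyset$).

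Then I would establish the two inclusions. For $\bigcup_{M' \in \M_D}\T_{M'} \subseteq \T_M$: if $\tuple{s,a,s'} \in \T_{M'}$ with $M' = \tuple{F,A,\pre',\eff'} \in \M_D$, then $\pre'(a) \in \V_{\Hp,\Dp}$ with $\pre'(a) \subseteq s$, so by (a) some $p \in \Pre(a)$ has $p \subseteq s$; moreover $\eff'(a) \in \V_{\He,\De} = \Eff(a)$ produces the same successor $s'$; hence $\tuple{s,a,s'} \in \T_M$. For $\T_M \subseteq \bigcup_{M' \in \M_D}\T_{M'}$: if $\tuple{s,a,s'} \in \T_M$, there are $p \in \Pre(a) = \U_{\Hp,\Dp} \subseteq \V_{\Hp,\Dp}$ with $p \subseteq s$ and $e \in \Eff(a) = \V_{\He,\De}$ with $s' = (s \setminus \overline{e}) \cup e$; by (b) there is a model $M' \in \M_D$ with $\pre'(a) = p$ and $\eff'(a) = e$, whence $\tuple{s,a,s'} \in \T_{M'}$. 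Combining the inclusions gives $\T_M = \bigcup_{M' \in \M_D}\T_{M'}$, and the consequences stated above complete the argument.

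The step I expect to demand the most care is observation (b): it is essential that every pair in $\U_{\Hp,\Dp} \times \V_{\He,\De}$ (and, simultaneously, the choices made for the other actions) is realised by a genuine deterministic model in $\M_D$, so that the non-deterministic semantics does not over-generate transitions relative to the union. Theorem \ref{thm:decomposition} is precisely what rules this out, since it makes $\M_D$ a ``product'' of the per-action precondition and effect version spaces; one just has to be careful that the version spaces of the remaining actions are non-empty, which is why the no-collapse assumption $\M_D \neq \emptyset$ is invoked. The only other subtlety is observation (a), the single place where the potentially multi-element boundary $\U_{\Hp,\Dp}$ interacts with the applicability test $p \subseteq s$.
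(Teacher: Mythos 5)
Your proof is correct and follows essentially the same route as the paper's: both establish $\T_M = \bigcup_{M' \in \M_D}\T_{M'}$, using the fact that every consistent precondition contains some element of the $\U_{\Hp,\Dp}$ boundary (for the $\supseteq$ direction) and Theorem~\ref{thm:decomposition} to realise any pair from $\U_{\Hp,\Dp}\times\V_{\He,\De}$ as an actual model of $\M_D$ (for the $\subseteq$ direction). The only differences are cosmetic — you argue the two inclusions directly rather than by contradiction, and you make explicit the non-collapse assumption $\M_D \neq \emptyset$ that the paper leaves implicit.
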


\begin{proof}
    Let $\T_{\M_D} = \bigcup_{M' \in \M_D} \T_{M'}$ denote the union of all transition systems induced by consistent models $\M_D$.  We show that $\T_M = \T_{\M_D}$.

    (Completeness) We start by proving that $\T_M \supseteq \T_{\M_D}$. By contradiction, assume that there exists a transition $t = \tuple{s,a,s'}$ such that $t \in \T_{\M_D}$ but $t \not\in \T_M$. Then, $\M_D$ must include a model $M' = \tuple{F,A,\pre', \eff'}$ such that $\pre'(a) \subseteq s$ and $s' = (s \setminus \overline{\eff'(a)}) \cup \eff'(a)$, and either (1) $\forall p \in \Pre(a) : p \not\subseteq s$, or (2) $\forall e \in \Eff(a) : s' \neq (s \setminus \overline{e}) \cup e$. Since $\Pre(a) = \U_{\Hp,\Dp}$ and $\pre'(a) \in \V_{\Hp,\Dp}$, there must exists a $p \in \Pre(a)$ such that $p \subseteq \pre'(a)$, and if $\pre'(a) \subseteq s$ is true then so must be $p \subseteq s$. This falsifies case (1). For case (2), observe that $\eff'(a) \in \V_{\He,\De}$ and $\Eff(a) = \V_{\He,\De}$. Therefore, $\eff'(a) \in \Eff(a)$ and case (2) cannot be true. 

    (Optimality) We prove $\T_M \subseteq \T_{\M_D}$. By contradiction, assume that $t \in \T_M$ but $t \not\in \T_{\M_D}$. 
    Then, it must be true that $\exists p \in \Pre(a) : p \subseteq s$ and $\exists e \in \Eff(a) : s' = (s \setminus \overline{e}) \cup e$ and $\M_D$ cannot contain a model $M' = \tuple{F,A,\pre', \eff'}$ such that $\pre'(a) \subseteq s$ and $s' = (s \setminus \overline{\eff'(a)}) \cup \eff'(a)$. However, this cannot be true since $\Pre(a) \subseteq \V_{\Hp,D}$ and $\Eff(a) = \V_{\He,D}$, so $\M_D$ contains a model $M'$ with $\pre'(a) = p$ and $\eff'(a) = e$. Finally, since $\T_M \supseteq \T_{\M_D}$ and $\T_M \subseteq \T_{\M_D}$ are true, we have that $\T_M = \T_{\M_D}$ which proves that $M$ is indeed an optimal solution for $\Lambda_C$.
\end{proof}

\section{Experimental Evaluation}\label{sec:evaluation}

We evaluate the sound and complete models produced by VSLAM empirically on a selection of planning domains. The code and benchmarks are in the supplementary material and will be made publicly available upon acceptance.

\begin{figure}[t]
    \centering
    \includegraphics[width=0.9\linewidth]{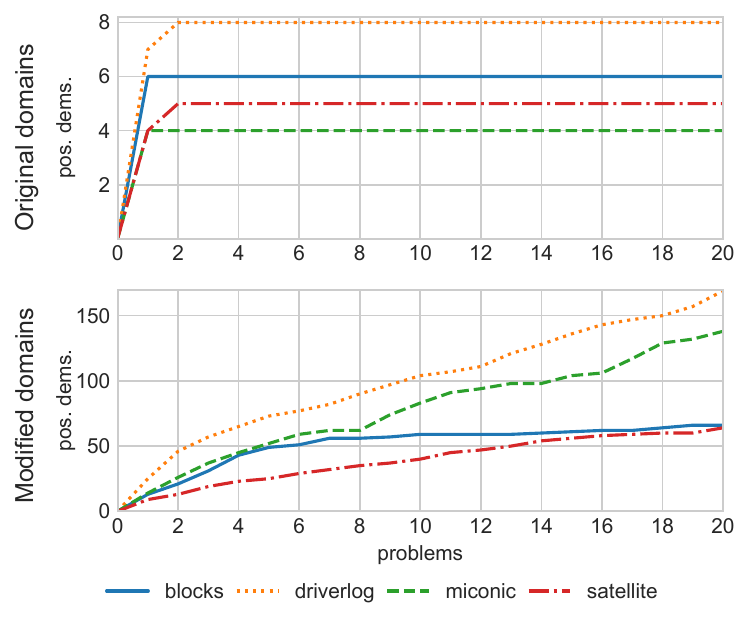}
    \caption{Distinct positive demonstrations collected with the original (top) and modified domains (bottom).}
    \label{fig:sample_analysis}
\end{figure}

\paragraph{Benchmarks.}
We select 4 domains from the International Planning Competition \cite{mcdermott:1998:IPC}, namely \block, \sate, \mic and \driver, and use publicly available generators \cite{seipp:2022:generators} to get 20 random problems for each domain. We solve all problems with LAMA \cite{richter:2010:lama} and collect each transition induced by the solution plans as a positive demonstration. Negative demonstrations are generated by randomly trying to execute actions throughout the states traversed by solution plans. 

Action model learning is usually performed over a lifted action representation. This assumption restricts the hypothesis space to those fluents that are related to the parameters of the action. To have more challenging benchmarks we modified the original actions by adding extra parameters. This modification enlarges the hypothesis space and so the number of demonstrations. As shown in Figure \ref{fig:sample_analysis}, with this simple modification, we increase by 10 to 30 times the number of \emph{distinct} demonstrations and reduce the overlap between plans. Note that with the original domains the number of positive demonstrations saturates after the second problem. 
Table \ref{tab:features} summarizes the features of our benchmarks: the second and the third columns report on the number of lifted actions and fluents and their maximal number of paratemers (in parentheses); the fourth and  the fifth columns report on the number of positive (POS) and negative (NEG) demonstrations collected.


\begin{table}[ht]
    \small
    \centering
    \begin{tabular}{l|c|c|c|c}
    \textbf{DOMAIN} & $A$ & $F$ & \textbf{POS} & \textbf{NEG} \\ \hline
    \block & 4 (3) & 5 (2) & 66 & 882 \\
    \driver & 6 (8) & 6 (2) & 169 & 903 \\
    \mic & 4 (4) & 8 (2) & 138 & 381 \\
    \sate & 5 (8) & 8 (2) & 64 & 166 \\
    \end{tabular}
    \caption{Domains, features and collected demonstrations.}
    \label{tab:features}
\end{table}

\begin{figure*}[ht]
    \centering
    \includegraphics[width=\linewidth]{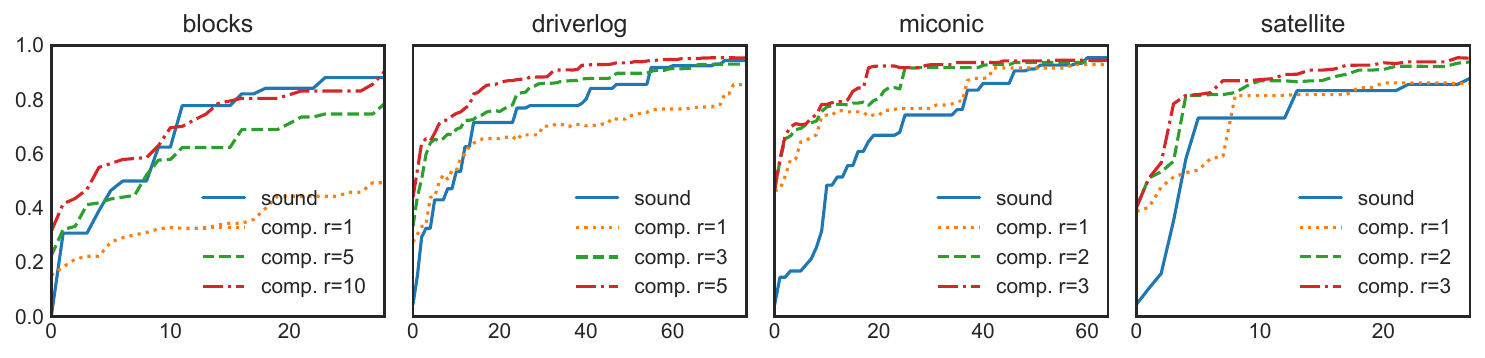}
    \caption{F1-score (y-axis) of the sound and complete action models as the training demonstrations increase (x-axis), with different positive vs negative ratios}
    \label{fig:f1score}
\end{figure*}

\paragraph{Evaluation criteria.} Following previous works on action model learning \cite{Lamanna:2021:olam,aineto:2019:fama}, we use the \emph{f1-score}, the harmonic mean of \emph{precision} and \emph{recall}. We interpret these metrics over a test set of demonstrations by having the learnt action model label them as positive or negative with respect to its transition system. \emph{Precision} degrades with \emph{false positives}, i.e., when the model accepts a negative demonstration as part of its transition system. Conversely, \emph{recall} degrades with \emph{false negatives}, i.e., when the model rejects a positive demonstration. Indeed, a sound action model will always have perfect \emph{precision} but lower \emph{recall}, meaning that it accepts few transitions but all of them are correct. In contrast, a complete model will have perfect recall yet low precision, since it will often accept transitions that do not belong to the true model. The \emph{f1-score} provides us with a very good proxy for understanding the usability of our models.
We split the collected demonstrations, using half for learning and half for testing, and measure the f1-score of both models as more demonstrations are processed. The complete model is evaluated simulating scenarios where negative demonstrations are seen at different rates. We do this by using a ratio $r$ of negative to positive demonstrations. For instance, a ratio $r=2$ indicates that the demonstrations set $D$ contains 2 negative demonstrations for every positive one. With this setting, we aim at understanding the impact of the distribution of our dataset.

\paragraph{Results.} Figure \ref{fig:f1score} illustrates our results; the x-axis reports on the number of growing positive demonstrations. For each such a positive demonstration there is a number of negative ones depending on ratio $r$; we represent such an information with different curves.
We observe that the sound action model performs better in more imbalanced domains (in terms of positive and negative demonstrations) such as \block. Instead, the complete model seems to be effective over more balanced domains like \mic and \sate. This comes with no surprise since more imbalanced distributions correlate to stricter preconditions and such models are closer (in the hypothesis space) to the sound model. The opposite holds true for the complete model. Generally, the complete model has the advantage in the earlier stages of the learning process, but in our experiment is later outperformed by the sound model after more demonstrations have been processed. Overall, no model dominates the other and their relative performance depends on the characteristics of the domain, how far into the learning we are and how accessible the positive and negative demonstrations are. This indicates a strong complementarity of the two models.


\section{Related Work}\label{sec:related}

Research on action model learning has produced a wide variety of sophisticated learning approaches -- different surveys can be found in \citet{jimenez:2012:review,arora:2018:review,aineto:2022:review}. Starting with the pioneering works of ARMS \cite{yang:2007:arms} and SLAF \cite{amir:2008:slaf}, research in this field has been quite prolific. Approaches like LAMP \cite{zhuo:2010:lamp} investigated the learning of more expressive action models, while others like FAMA \cite{aineto:2019:fama} and AMAN \cite{zhuo:2013:aman} focused on learning from incomplete or noisy demonstrations. We can even find approaches that actively seek the demonstrations that will help them learn faster \cite{Lamanna:2021:olam, verma:2021:asking}.

Broadly, most learning approaches can be classified, according to their notion of solution, into those that compute a model consistent with the demonstrations \cite{cresswell:2013:locm,aineto:2019:fama,bonet:2020:learning}, or those that target the action model that maximizes some objective or fitness function \cite{yang:2007:arms,kuvcera:2018:louga,zhuo:2010:lamp,zhuo:2013:aman,mourao:2012:alice}. While we follow the former interpretation, our approach is one of the few, alongside SLAF \cite{amir:2008:slaf}, which is able to compute all solutions to the problem. SLAF computes a CNF formula representing all possible transitions that can be regarded as a form of version space. However, this formula does not offer the compactness of our boundaries nor can be easily manipulated and, indeed, the only way to extract from it a concrete solution model is using a SAT solver. Unlike us, SLAF handles partial observability, a feature that we expect to support in the future following similar extensions for version spaces.


The approach that we regard as the closest to our own is SAM \cite{Stern:2017:SAM} for its focus on safe (sound) models, learning setting and, interestingly, a foundational connection. The authors of SAM link their approach to Valiant's elimination algorithm \cite{valiant:1984:theory} and Mitchel \shortcite{mitchell:1982:generalization} himself describes this algorithm as the subproblem of computing the $\L$ boundary in version space learning. Our work refreshes this connection in an action model learning setting. Indeed, SAM focuses on one extreme of the spectrum of solution models, those guaranteeing soundness, which are tied to the $\L$ boundary. On the other extreme we find the complete models that we highlight in this work. Whether there are other interesting models in the middle between these two extremes is object of future work.

\section{Conclusions and Future Work}\label{sec:discussion}

In this paper we have proposed an approach to learning action models from first principles. We do so by exploiting version spaces to a great extent. One of the main benefits of our approach is the ability to learn in an integrated and comprehensive way sound models as for  \citet{Stern:2017:SAM} together with complete models, providing therefore a great deal of flexibility. Indeed, our framework enables an agent not only to learn from positive demonstrations but also from failures. Empirically, we observed that with this facility in place, an agent can start learning something useful for reasoning already with a few number of examples. 

There are a number of future works in our research agenda. First, we aim at lifting, or at least relaxing, some of our assumptions. In this regard, we want to investigate \emph{version space algebra} \cite{lau:2003:algebra} as a means to tackle more expressive action models. Indeed, \emph{version space algebra} allows to operate over much more complex hypothesis spaces, providing a solid foundation to target action models involving, e.g., numeric state variables \cite{fox:03:pddl}; of importance in this context is the relationship with what was done by \citet{Mordoch:2023:NSAM}. Second, we plan to support noisy demonstrations following the steps taken by \citet{Norton:1992:noise} for version spaces. Last but not least, we would like to investigate the role of version spaces in an active-learning approach. Speculatively, the region between the boundaries of the version space is where uncertainty resides so focusing our attention in this area should accelerate convergence towards the true model.



\bigskip

\bibliography{aaai24}

\end{document}


\maketitle

\section{Technical Proofs}

This appendix contains the full proofs of all the lemmas in our paper.

\begin{manualtheorem}{3}\label{lem:effect_consistency} 
Given two states $s$ and $s'$, and an effect hypothesis $h \subseteq L$, $s' \setminus s \subseteq h \subseteq s'$ if and only if $s' = (s \setminus \overline{h}) \cup h$.
\end{manualtheorem} 

\begin{proof}
Starting with $s' \setminus s \subseteq h \subseteq s' \implies s' = (s \setminus \overline{h}) \cup h$. By contradiction, we assume the antecedent is true while the consequent is false. If $s' \neq (s \setminus \overline{h}) \cup h$ then either (1) $s' \not\subseteq (s \setminus \overline{h}) \cup h$ or (2) $s' \not\supseteq (s \setminus \overline{h}) \cup h$.

For (1), it means that

\[
s' \setminus ((s \setminus \overline{h}) \cup h) \neq \emptyset 
\]
\[
(s' \setminus (s \setminus \overline{h})) \cap ( s' \setminus h) \neq \emptyset 
\]
\[
((s' \cap \overline{h}) \cup (s' \setminus s)) \cap (s' \setminus h) \neq \emptyset 
\]

Since $h \subseteq s'$ then $s' \cap \overline{h} = \emptyset$

\[
(s' \setminus s) \cap (s' \setminus h) \neq \emptyset 
\]

Since $s' \setminus s \subseteq h$, it follows that $(s' \setminus s) \cap (s' \setminus h) = \emptyset$ and we arrive at a contradiction $\emptyset \neq \emptyset$.

For (2), it means that

\[
((s \setminus \overline{h}) \cup h) \setminus s' \neq \emptyset 
\]
\[
((s \setminus \overline{h}) \setminus s')  \cup (h \setminus s') \neq \emptyset 
\]

Since $h \subseteq s'$ then $h \setminus s' = \emptyset$

\[
(s \setminus \overline{h}) \setminus s' \neq \emptyset 
\]


\[
(s \setminus s') \setminus \overline{h} \neq \emptyset 
\]





Before continuing, observe that a state represents a full assignment of fluents $F$ and can be understood as $|F|$-sized bit vectors. Under this interpretation, $s - s' = \mytilde(\mytilde s + s')$ where $\mytilde$ denotes the bitwise complement (logical negation on each bit). Coming back to our set representation, this means that $\overline{s' \setminus s} = s \setminus s'$ since $s \setminus s'$ and $s' \setminus s$ contain the literals associated to fluents that are evaluated differently in $s$ and $s'$ which given the Boolean domain of fluents can only be their complementary literals.
Resuming our proof, note that, since $\overline{s' \setminus s} = s \setminus s'$ and $s' \setminus s \subseteq h$, we have that $s \setminus s' \subseteq \overline{h}$ and we arrive at a contradiction $\emptyset \neq \emptyset$


Now, we move on to the proof of $s' = (s \setminus \overline{h}) \cup h \implies s' \setminus s \subseteq h \subseteq s'$. Again, we assume the antecedent is true while the consequent is false. Meaning, either (3) $s' \setminus s \not\subseteq h$ or (4) $h \not\subseteq s'$.

For (3), consider that $s' = (s \setminus \overline{h}) \cup h$ implies $s' \subseteq (s \setminus \overline{h}) \cup h$.

\[
s' \subseteq (s \setminus \overline{h}) \cup h
\]
\[
s' \setminus (s \setminus \overline{h}) \subseteq h
\]
\[
(s' \setminus s) \cup (s' \cap \overline{h}) \subseteq h
\]

Hence, $(s' \setminus s) \subseteq h$ and (3) cannot be true.

Finally, (4) is very obviously false given that $s'$ is the union of $(s \setminus \overline{h})$ and $h$.

\end{proof}

\begin{manualtheorem}{8}\label{lem:optimization}
    Let $\L_{\Hp,\Dp} = \{hp_\L\}$, $\L_{\He,\De} = \{he_\L\}$ and $\U_{\He,\De} = \{he_\U\}$, we have that $he_\L = he_\U \setminus hp_\L$.
\end{manualtheorem}

\begin{proof}
For simplicity, we develop this segment of the proof assuming only two learning examples, i.e., $\De = \{(s_1, s'_1), (s_2, s'_2)\}$. 

\noindent Starting with

    $(s'_1 \cap s'_2) \setminus (s_1 \cap s_2)$

\noindent and applying De Morgan's Law we arrive at

    $((s'_1 \cap s'_2) \setminus s_1) \cup ((s'_2 \cap s'_1) \setminus s_2)$

\noindent Intersection with set difference is set difference with intersection

    $((s'_1 \setminus s_1) \cap s'_2) \cup ((s'_2 \setminus s_2 ) \cap s'_1)$

\noindent From Lemma \ref{lem:effect_consistency}, we know that for all $he \in \V_{\He,\De}$ it holds that $(s'_1 \setminus s_1) \subseteq he$, $(s'_2 \setminus s_2) \subseteq he$, $he \subseteq s'_1$ and $he \subseteq s'_2$, so it follows that $(s'_1 \setminus s_1) \subseteq s'_2$ and $(s'_2 \setminus s_2) \subseteq s'_1$ and we arrive at


    $(s'_1 \setminus s_1) \cup (s'_2 \setminus s_2 )$
    
Generalizing this procedure to any $\De$, we have that $he_\L = \bigcup_{(s,s')\in \De} s' \setminus s = (\bigcap_{(s,s')\in \De} s') \setminus (\bigcap_{(s,s')\in \De} s)$ and, therefore, that $he_\L = he_\U \setminus hp_\L$.

\end{proof}

\begin{manualtheorem}{9}\label{thm:indirect_result}
Let $\L_{\Hp,\Dp} = \{hp_\L\}$ and $\V_{\He,\De}$. The following holds $\forall he,he' \in \V_{\He,\De} : he \setminus he' \subseteq hp_\L$. 
\end{manualtheorem}

\begin{proof}
Let $\L_{\He,\De} = \{he_\L\}$ and $\U_{\He,\De} = \{he_\U\}$ be the boundaries of $\V_{\He,\De}$. From Lemma \ref{lem:optimization}, we know that $he_\L = he_\U \setminus hp_\L$. Since $\forall he \in \V_{\He,\De} : he_\L \subseteq he \subseteq he_\U$, it follows that $he \setminus he_\L \subseteq  he_\U \setminus he_\L$. Substituting, $he \setminus he_\L \subseteq he_\U \setminus (he_\U \setminus hp_\L)$ simplifying to $he \setminus he_\L \subseteq he_\U \cap hp_\L$.
\end{proof}